\newcommand{\inner}[1]{\left\langle#1\right\rangle}
\def\R{\mathbb{R}}
\newcommand{\norm}[1]{\left\|#1\right\|}
\newcommand{\abs}[1]{\left|#1\right|}
\def\vol{\mathop{\rm vol}\nolimits}
\newcommand{\Id}{\mathbb{I}}
\def\argmin{\mathop{\rm arg\,min}\limits}%    a math operator.
\def\prox{\mathop{\rm prox}\nolimits}
\def\cut{\mathrm{cut}}
\def\Ncut{\mathrm{NCut}}
\def\Rcut{\mathrm{RCut}}
\def\min{\mathop{\rm min}\nolimits}
\def\max{\mathop{\rm max}\nolimits}
\def\ones{\mathbf{1}}
\def\maxop{\mathop{\rm max}\limits} %max operator
\def\TV{\mathop{\rm TV}\nolimits}
\def\tT{{\mbox{\tiny{T}}}}
\newcommand{\lb}{{\langle}}
\newcommand{\rb}{{\rangle}}
\newtheorem{lemma}{Lemma}[section]
\newtheorem{proposition}{Proposition}[section]
\newtheorem{theorem}{Theorem}[section]
\newtheorem{definition}{Definition}[section]
\title{The Total Variation on Hypergraphs - Learning on Hypergraphs Revisited}
\author{
Matthias Hein, Simon Setzer, Leonardo Jost and Syama Sundar Rangapuram\\
Department of Computer Science\\
Saarland University\\
}
\begin{document}

\maketitle

\begin{abstract}
Hypergraphs allow one to encode higher-order relationships in data
and are thus a very flexible modeling tool. Current learning methods
are either based on approximations of the hypergraphs via graphs or 
on tensor methods which are only applicable under special
conditions. In this paper, we present a new learning framework on
hypergraphs which fully uses the hypergraph structure. The key element 
is a family of regularization functionals based on 
the total variation on hypergraphs.
\end{abstract}

\section{Introduction}

 Graph-based learning is by now well established in machine learning and is the standard way to deal with data that encode
 pairwise relationships. Hypergraphs are a natural extension of graphs which allow to model also higher-order relations
 in data. It has been recognized in several application areas such as computer vision \cite{HuaLiuMet2009, OchBro2012},
 bioinformatics \cite{KlaHauThe2009,TiaHwaKua2009} and information retrieval \cite{GibKleRag2000,BuEtAl2010} that such
 higher-order relations are available and help to improve the learning performance. 
 %Clearly, this list of applications is incomplete and can be extended considerably.
  
 Current approaches in hypergraph-based learning can be divided into two categories. The first one uses tensor methods
 for clustering as the higher-order extension of matrix (spectral) methods for graphs \cite{ShaZasHaz2006,RotPel2009,LeoSmi2012}. 
 While tensor methods are mathematically quite appealing, they are limited to so-called $k$-uniform hypergraphs, that is, 
 each hyperedge contains exactly $k$ vertices. Thus, they are not able to model mixed higher-order relationships.
 The second main approach can deal with arbitrary hypergraphs 
 %and includes methods for semi-supervised learning as well as clustering 
 \cite{AgaEtAl2005,ZhoHuaSch2006}.
 The basic idea of this line of work is to approximate the hypergraph via a standard weighted graph. In a second step, one then uses methods developed for 
 graph-based clustering and semi-supervised learning. % which are very often directly or indirectly related {\color{red}to a cut of the graph. 
 %The most popular approach which is most often used nowadays in applications is \cite{ZhoHuaSch2006}. 
 The two main ways of approximating the hypergraph by a standard graph are the clique and the star expansion which were compared in \cite{AgaBraBel2006}. One can summarize \cite{AgaBraBel2006} by stating that no approximation fully encodes the hypergraph structure. Earlier, \cite{IhlWagWag1993} have proven that an exact representation of the hypergraph
 via a graph retaining its cut properties is impossible. %We will discuss this in more detail in Section \ref{sec:cuts}.
 
In this paper, we overcome the limitations of both existing approaches. For both clustering and 
semi-supervised learning the key element, either explicitly or implicitly, is the cut functional.
Our aim is to directly work with the cut defined on the hypergraph. We discuss in detail the differences of the
hypergraph cut and the cut induced by the clique and star expansion in Section \ref{sec:cuts}. Then, in Section \ref{sec:TV}, we
introduce the total variation on a hypergraph as the Lovasz extension of the hypergraph cut. Based on this, we propose a family
of regularization functionals which interpolate between the total variation and a regularization functional enforcing smoother functions
on the hypergraph corresponding to Laplacian-type regularization on graphs. 
They are the key for the semi-supervised learning method introduced in Section \ref{sec:SSL}. In Section \ref{sec:BGC}, we show in line of recent research \cite{HeiBue2010, SB10, HeiSet2011, BueEtAl2013} %on tight relaxations of balanced graph cuts 
that there exists a tight relaxation of the normalized hypergraph cut. In both learning problems, convex optimization problems have to be solved for which we derive scalable methods in Section \ref{sec:opt}. The main ingredients of these algorithms are proximal mappings for which we provide a novel algorithm and analyze its complexity. 
In the experimental section \ref{sec:exp}, we show that fully incorporating hypergraph structure is beneficial. \emph{All proofs are moved to the supplementary material.}

\section{The Total Variation on Hypergraphs}
A large class of graph-based algorithms in semi-supervised learning and clustering is based either explicitly or implicitly on the cut. Thus, we discuss first in Section \ref{sec:cuts} the hypergraph cut
and the corresponding approximations.% of \cite{ZhoHuaSch2006,AgaBraBel2006}. 
%We show that in particular for large hyperedges the approximations are far away from the hypergraph cut. 
In Section \ref{sec:TV}, we introduce in analogy to graphs, the total variation on hypergraphs as the Lovasz
extension of the hypergraph cut.% as well as a corresponding family of regularization
%functionals.
%Finally, we discuss in Section \ref{sec:SSL} and \ref{sec:BGC} the application of these functionals for semi-supervised learning and clustering based on balanced hypergraph cuts.

%%%%%%%%%%%%%%%%%%%%%%%%%%%%%%%%%%%%%%%%%%%%%%%%%%%%%%%%%%%%%%%%%%%%%%%%%%%%%%%%%%%%%%%%%%%%%%%%%%%%%%%%%%%%%%%%%%
%%%%%%%%%%%%%%%%%%%%%%%%%%%%%%%%%%%%%%%%%%%%%%%%%%%%%%%%%%%%%%%%%%%%%%%%%%%%%%%%%%%%%%%%%%%%%%%%%%%%%%%%%%%%%%%%%%

\subsection{Hypergraphs, Graphs and Cuts}\label{sec:cuts}
Hypergraphs allow modeling relations which are not only pairwise as in graphs but involve multiple vertices. In this paper, we consider weighted undirected hypergraphs $H=(V,E,w)$ where $V$ is the vertex set with $|V|=n$ and $E$ the set of hyperedges with $|E|=m$. Each hyperedge $e\in E$ corresponds to a subset of vertices, i.e., to an element of $2^V$. The vector $w \in \R^m$ contains for each hyperedge $e$ its non-negative weight $w_e$. In the following, we use the letter $H$ also for the incidence matrix $H \in \R^{|V| \times |E|}$
which is for $i \in V$ and $e \in E$, $H_{i,e} = \begin{cases} 1 & \textrm{ if } i \in e,\\ 0 & \textrm{ else.}\end{cases}$.
The degree of a vertex $i \in V$ is defined as $d_i = \sum_{e \in E}w_e H_{i,e}$ and the cardinality of an edge $e$ can be written as $|e|=\sum_{j \in V} H_{j,e}$.
We would like to emphasize that we do \emph{not} impose the restriction that the hypergraph is $k$-uniform, i.e., that each hyperedge contains \emph{exactly} $k$ vertices.

The considered class of hypergraphs contains the set of undirected, weighted graphs which is equivalent to the set of $2$-uniform hypergraphs. 
The motivation for the total variation on hypergraphs comes from the correspondence between the cut on a graph and the total variation functional.
Thus, we recall the definition of the cut on weighted graphs $G = (V,W)$ with weight matrix $W$. Let $\overline{C}=V \backslash C$ denote the complement of $C$ in $V$. Then, for a partition $(C,\overline{C})$, the cut is defined as
\[
 \cut_G(C,\overline{C}) = \sum\nolimits_{i,j\,:\,i \in C,j \in \overline{C}} \,w_{ij}.
\]
This standard definition of the cut carries over naturally to a hypergraph $H$
\begin{equation}\label{eq:cut-hypergraph}
 \cut_H(C,\overline{C}) = \sum\nolimits_{\stackrel{e \in E:}{e \cap C \neq \emptyset, \; e \cap \overline{C} \neq \emptyset}}\, w_e.
\end{equation}
Thus, the cut functional on a hypergraph is just the sum of the weights of the hyperedges which have vertices both in $C$ and $\overline{C}$. It is not biased towards a particular way the hyperedge is cut, that is, how many vertices of the hyperedge are in $C$ resp. $\overline{C}$. 
%We only want to use the information if it is cut or not. 
This emphasizes that the vertices in a hyperedge belong together and we penalize every cut of a hyperedge with the same value.

In order to handle hypergraphs with existing methods developed for graphs, the focus in previous works \cite{ZhoHuaSch2006,AgaBraBel2006} has been on transforming the hypergraph into a graph. In \cite{ZhoHuaSch2006}, they suggest using the \textit{clique expansion} (CE), i.e., every hyperedge $e \in H$ is replaced with a fully connected subgraph where every edge in this subgraph has weight $\frac{w_e}{|e|}$.
%, where $|e|$ denotes the number of vertices in the hyperedge $e$. 
This leads to the cut functional $\cut_{CE}$, %where $CE$ stands for Clique Expansion,
\begin{equation}\label{eq:cut-CE}
 \cut_{CE}(C,\overline{C}) := \sum\nolimits_{\stackrel{e \in {\bm E}:}{e \cap C \neq \emptyset, \; e \cap \overline{C} \neq \emptyset}} \,\frac{w_e}{|e|} |e\cap C|\,|e \cap \overline{C}|.
\end{equation}
Note that in contrast to the hypergraph cut \eqref{eq:cut-hypergraph}, the value of $\cut_{CE}$ depends on the way each hyperedge is cut since the term $|e\cap C|\,|e \cap \overline{C}|$ makes the weights dependent on the partition. In particular, the smallest weight is attained if only a single vertex is split off, whereas the largest
weight is attained if the partition of the hyperedge is most balanced. In comparison to the hypergraph cut, this leads to a bias towards cuts that favor splitting off
single vertices from a hyperedge which in our point of view is an undesired property for most applications. We illustrate this with an example in Figure \ref{fig:CutEx}, where the minimum hypergraph cut ($\cut_H$) leads to a balanced partition, whereas the minimum clique expansion cut ($\cut_{CE}$) not only cuts an additional hyperedge but is also unbalanced. This is due to its bias towards splitting off single nodes of a hyperedge. Another argument against the clique expansion is computational complexity. For large hyperedges
the clique expansion leads to (almost) fully connected graphs which makes computations slow and is prohibitive for large hypergraphs.
\begin{figure}
\begin{center}
\includegraphics[width=0.46\textwidth]{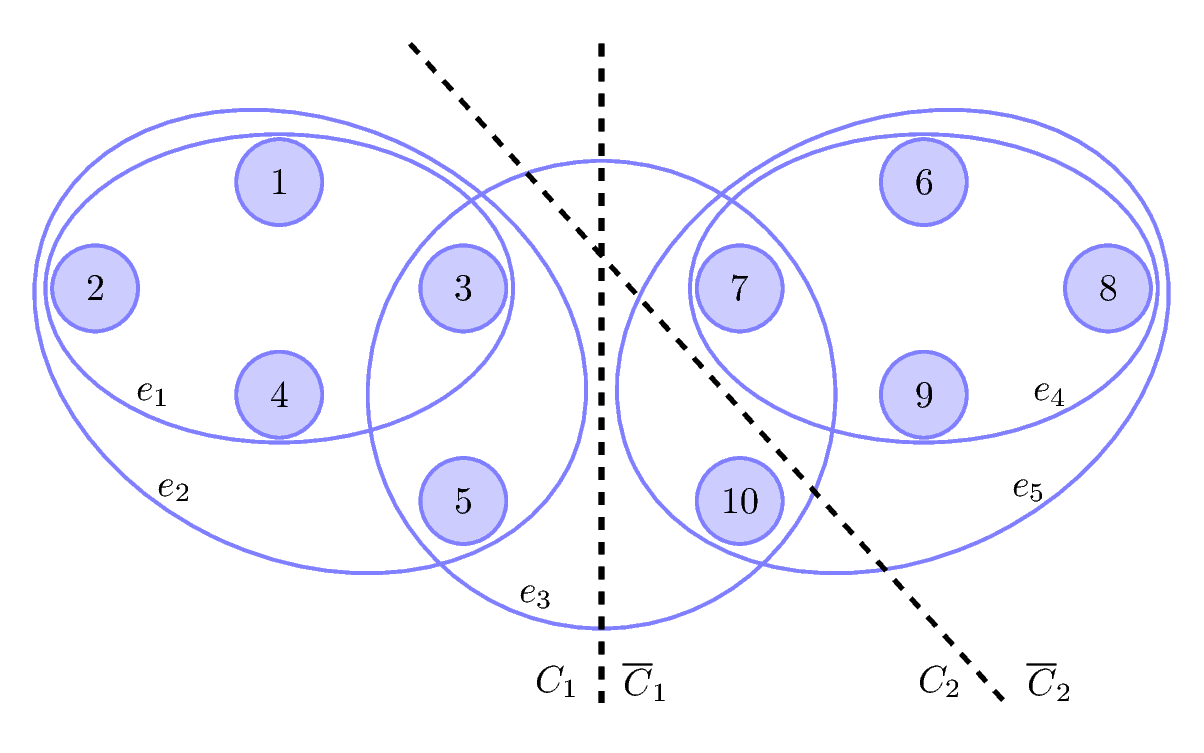}
\end{center}
\vspace{-4mm}
\caption{\label{fig:CutEx}Minimum hypergraph cut $\cut_H$ vs. minimum cut of the clique expansion $\cut_{CE}$: For edge weights $w_1 = w_4 = 10$, $w_2 = w_5 = 0.1$ and $w_3 =0.6$ the minimum hypergraph cut is $(C_1,\overline{C}_1)$ which is perfectly balanced. Although cutting one hyperedge more and being unbalanced, $(C_2,\overline{C}_2)$ is the optimal cut for the clique expansion approximation.}
\vspace{-4mm}
\end{figure}

We omit the discussion of the \emph{star graph} approximation of hypergraphs discussed in \cite{AgaBraBel2006} as it is shown there that the star graph expansion is very similar to the clique expansion. Instead, we want to recall the result of Ihler et al. \cite{IhlWagWag1993} which states that in general there exists no graph with the same vertex set $V$ which has for every partition $(C,\overline{C})$ the same cut value as the hypergraph cut. 

Finally, note that for weighted $3$-uniform hypergraphs it is always possible to find a corresponding graph such that any cut of the
graph is equal to the corresponding cut of the hypergraph. 
\begin{proposition}
Suppose $H=(V,E,w)$ is a weighted $3$-uniform hypergraph. Then, $W \in \R^{|V| \times |V|}$ defined as $W = \frac{1}{2} H \mathrm{diag}(w)H^T$
defines the weight matrix of a graph $G=(V,W)$ where each cut of $G$ has the same value as the corresponding
hypergraph cut of $H$.
\end{proposition}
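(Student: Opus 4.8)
The plan is to compute the entries of $W$ explicitly and then rewrite the graph cut $\cut_G$ as a sum over hyperedges, exploiting the special structure of $3$-uniform hypergraphs. First I would determine the off-diagonal entries of $W = \frac{1}{2}H\,\mathrm{diag}(w)H^T$. For $i \neq j$ one obtains
\[
W_{ij} = \frac{1}{2}\sum_{e \in E} H_{i,e}\,w_e\,H_{j,e} = \frac{1}{2}\sum_{e\,:\,i \in e,\; j \in e} w_e,
\]
i.e.\ $W_{ij}$ is half the total weight of the hyperedges that contain both $i$ and $j$. The diagonal entries of $W$ are irrelevant here, since the graph cut only sums over pairs with $i\in C$ and $j\in\overline{C}$, which forces $i\neq j$.

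Next I would substitute this into the definition of the graph cut and interchange the order of summation to group the contribution by hyperedge:
\[
\cut_G(C,\overline{C}) = \sum_{i\in C,\, j\in\overline{C}} W_{ij} = \frac{1}{2}\sum_{e\in E} w_e\,\big|\{(i,j): i\in C\cap e,\; j\in\overline{C}\cap e\}\big| = \frac{1}{2}\sum_{e\in E} w_e\,|e\cap C|\,|e\cap\overline{C}|.
\]
Here I have used that the number of ordered pairs $(i,j)$ with $i \in C\cap e$ and $j \in \overline{C}\cap e$ equals $|e\cap C|\,|e\cap\overline{C}|$, since every vertex of a hyperedge lies in exactly one of $C$ or $\overline{C}$. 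This establishes a per-hyperedge form valid for arbitrary cardinalities.

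The crucial observation, and the only place where $3$-uniformity enters, is that for every hyperedge $e$ with $|e|=3$ the factor $|e\cap C|\,|e\cap\overline{C}|$ takes only two possible values: it is $0$ when $e$ lies entirely in $C$ or entirely in $\overline{C}$, and it equals $2 = 1\cdot 2$ whenever $e$ is cut, because the only way to split a $3$-element set into two nonempty parts is as $1+2$. Hence $\frac{1}{2}w_e\,|e\cap C|\,|e\cap\overline{C}|$ equals exactly $w_e$ on the cut hyperedges and vanishes otherwise, so the sum collapses to $\sum_{e\,:\,e\cap C\neq\emptyset,\; e\cap\overline{C}\neq\emptyset} w_e = \cut_H(C,\overline{C})$, which is the claim. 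The main (and essentially only) obstacle is recognizing this constancy of the product over all cuts of a triple; it is precisely this constancy that fails for $|e|\geq 4$, where $1+3$ and $2+2$ give the distinct products $3$ and $4$, explaining why the construction is special to the $3$-uniform case and fully consistent with the impossibility result of Ihler et al.\ for general hypergraphs.
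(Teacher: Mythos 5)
Your proof is correct and follows essentially the same route as the paper's: both reduce $\cut_G$ to the per-hyperedge form $\frac{1}{2}\sum_{e\in E} w_e\,|e\cap C|\,|e\cap\overline{C}|$ and then note that for a $3$-element hyperedge this product is $2$ when $e$ is cut and $0$ otherwise, cancelling the factor $\frac{1}{2}$. The only difference is that you derive this per-hyperedge formula explicitly from the entries of $W$ (and correctly dismiss the diagonal/self-loop entries), whereas the paper states it without derivation.
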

%%%%%%%%%%%%%%%%%%%%%%%%%%%%%%%%%%%%%%%%%%%%%%%%%%%%%%%%%%%%%%%%%%%%%%%%%%%%%%%%%%%%%%%%%%%%%%%%%%%%%%%%%%%%%%%%%%
%%%%LONGVERSION%%%%%%%%%%%%%%%%%%%%%%%%%%%%%%%%%%%%%%%%%%%%%%%%%%%%%%%%%%%%%%%%%%%%%%%%%%%%%%%%%%%%%%%%%%%%%%%%%%
%%%%%%%%%%%%%%%%%%%%%%%%%%%%%%%%%%%%%%%%%%%%%%%%%%%%%%%%%%%%%%%%%%%%%%%%%%%%%%%%%%%%%%%%%%%%%%%%%%%%%%%%%%%%%%%%%%
\begin{proof}
The cut value of a partition $(C,\overline{C})$ of $G$ is given as
\[ \cut_G(C,\overline{C}) = \frac{1}{2} \sum_{e \in E} |e \cap C| |e \cap \overline{C}| w_e.\]
The product $|e \cap C| |e \cap \overline{C}|$ takes the values $2$ if $e$ is cut by $C$ and zero otherwise.
Because of the factor $\frac{1}{2}$, we thus get equivalence to the hypergraph cut.
\end{proof}
%%%%%%%%%%%%%%%%%%%%%%%%%%%%%%%%%%%%%%%%%%%%%%%%%%%%%%%%%%%%%%%%%%%%%%%%%%%%%%%%%%%%%%%%%%%%%%%%%%%%%%%%%%%%%%%%%%
%%%%LONGVERSION%%%%%%%%%%%%%%%%%%%%%%%%%%%%%%%%%%%%%%%%%%%%%%%%%%%%%%%%%%%%%%%%%%%%%%%%%%%%%%%%%%%%%%%%%%%%%%%%%%
%%%%%%%%%%%%%%%%%%%%%%%%%%%%%%%%%%%%%%%%%%%%%%%%%%%%%%%%%%%%%%%%%%%%%%%%%%%%%%%%%%%%%%%%%%%%%%%%%%%%%%%%%%%%%%%%%%

%%%%%%%%%%%%%%%%%%%%%%%%%%%%%%%%%%%%%%%%%%%%%%%%%%%%%%%%%%%%%%%%%%%%%%%%%%%%%%%%%%%%%%%%%%%%%%%%%%%%%%%%%%%%%%%%%%
%%%%%%%%%%%%%%%%%%%%%%%%%%%%%%%%%%%%%%%%%%%%%%%%%%%%%%%%%%%%%%%%%%%%%%%%%%%%%%%%%%%%%%%%%%%%%%%%%%%%%%%%%%%%%%%%%%
\subsection{The Total Variation on Hypergraphs}\label{sec:TV}
% total variation type on graphs: ChaOshShe2001
% discussion of total variation on graphs as Lovasz extension of cut
% generalization to hypergraphs 
% requires: lovasz extension, submodularity of hypergraph cut can be avoided through observation that Hypergraph-tv is convex
% discussion: why is it called total variation
% relation to group sparsity
% previous regularization on hypergraphs introduced by Zhou
In this section, we define the total variation on hypergraphs. The key technical element is the Lovasz extension which extends
a set function, seen as a mapping on $2^V$, to a function on $\R^{|V|}$.
\begin{definition}\label{def:Lovasz}
Let $\hat{S}:2^V \rightarrow \R$ be a set function with $\hat{S}(\emptyset)=0$.
Let $f \in \R^{|V|}$, let $V$ be ordered such that $f_1\leq f_2 \leq \ldots  \leq f_n$
and define $C_i = \{ j  \in V \, | \, j > i\}$. %and $C_0=V$.
Then, the \textbf{Lovasz extension} $S:\R^{|V|} \rightarrow \R$ of $\hat{S}$ is given by %the convex function $S: \R^|V| \rightarrow \R$, 
\begin{align*}
S(f) &=\, \sum_{i=1}^{n} f_i \Big(\hat{S}(C_{i-1}) - \hat{S}(C_i)\Big)=\, \sum_{i=1}^{n-1} \hat{S}(C_i)(f_{i+1}-f_i) + f_1 \hat{S}(V).
\end{align*}
Note that for the characteristic function of a set $C \subset V$, we have $S(\ones_C)=\hat{S}(C)$.
\end{definition}
It is well-known that the Lovasz extension $S$ is a convex function if and only if $\hat{S}$ is submodular \cite{Bach11}.
For graphs $G=(V,W)$, the total variation on graphs is defined as the Lovasz extension of the graph cut \cite{Bach11} given as $TV_G: \R^{|V|} \rightarrow \R$, 
$\TV_G(f) = \frac{1}{2}\sum_{i,j=1}^n w_{ij}|f_i-f_j|.$
\begin{proposition}\label{pro:TV}
The \textbf{total variation  $\TV_H:\R^{|V|} \rightarrow \R$ on a hypergraph} $H=(V,E,w)$ defined as the Lovasz extension of the hypergraph cut, $\hat{S}(C) = \cut_H(C,\overline{C})$, is a convex function given by
\begin{align*}
  \TV_H(f) = \sum_{e \in E} w_e \Big(\max\limits_{i \in e} f_i - \min\limits_{j \in e} f_j\Big) = \sum_{e \in E} w_e \maxop_{i,j \in e} |f_i -f_j|.
\end{align*}
%Moreover, $ \TV_H(f)$ is convex.
\end{proposition}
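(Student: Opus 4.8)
The plan is to exploit that the Lovász extension is a \emph{linear} operation on set functions together with the fact that the hypergraph cut decomposes additively over hyperedges. Writing $\hat{S}(C) = \cut_H(C,\overline{C}) = \sum_{e \in E} w_e\, \hat{S}_e(C)$, where $\hat{S}_e(C) = 1$ if $e \cap C \neq \emptyset$ and $e \cap \overline{C} \neq \emptyset$ and $\hat{S}_e(C) = 0$ otherwise, I observe from Definition \ref{def:Lovasz} that each value $\hat{S}(C_i)$ enters $S(f)$ linearly. Hence the Lovász extension of the sum is the corresponding sum of Lovász extensions, so it suffices to compute the Lovász extension $S_e$ of a single indicator $\hat{S}_e$ and then to form $\TV_H(f) = \sum_{e \in E} w_e S_e(f)$.

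Next I would evaluate $S_e$ for a fixed $f$. Fixing the ordering $f_1 \le \dots \le f_n$ with $C_i = \{j : j > i\}$ from the definition, I note $\hat{S}_e(V) = 0$ (since $\overline{V} = \emptyset$), so only $\sum_{i=1}^{n-1}\hat{S}_e(C_i)(f_{i+1}-f_i)$ survives. Letting $a$ and $b$ denote the smallest and largest positions (in this ordering) occupied by vertices of $e$, the set $C_i = \{j : j > i\}$ meets $e$ iff $i < b$, while its complement meets $e$ iff $i \ge a$; therefore $\hat{S}_e(C_i) = 1$ exactly for $a \le i \le b-1$. The key step is then the telescoping sum $S_e(f) = \sum_{i=a}^{b-1}(f_{i+1}-f_i) = f_b - f_a = \max_{i \in e} f_i - \min_{j \in e} f_j$, where the last identity holds because $b$ and $a$ are the positions of the largest and smallest $f$-values among vertices of $e$.

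Summing over $e$ with weights $w_e$ gives the first claimed expression, and the second follows from the elementary identity $\max_{i\in e} f_i - \min_{j \in e} f_j = \max_{i,j \in e}|f_i - f_j|$ (the range of a finite set equals its largest pairwise absolute difference). Finally, convexity is immediate from the closed form: for each $e$ the map $f \mapsto \max_{i\in e} f_i - \min_{j \in e} f_j = \max_{i \in e} f_i + \max_{j \in e}(-f_j)$ is a sum of two pointwise maxima of linear functions, hence convex, and $\TV_H$ is a nonnegative combination of such terms. Equivalently, one could invoke the stated equivalence that the Lovász extension is convex iff the set function is submodular, after checking submodularity of $\cut_H$, but the direct argument from the formula is shorter.

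The only real subtlety I anticipate is bookkeeping in the second step: being careful that it is \emph{position in the sorted order}, not the vertex label, that determines membership in $C_i$, and confirming the endpoints $a \le i \le b-1$ of the range of cut sets so that the telescoping collapses exactly to $f_b - f_a$. Degenerate cases (ties in the $f$-values, or $|e|=1$ giving $a=b$ and $S_e(f)=0$) are consistent with the formula and require no separate treatment.
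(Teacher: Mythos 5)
Your proof is correct, and it takes a genuinely different route from the paper's, even though both are direct computations from Definition \ref{def:Lovasz}. The paper works with the full cut function at once, using the first form of the Lovász extension: it rewrites $\hat{S}(C_{i-1})-\hat{S}(C_i)$ as $\cut(\{i\},\overline{C_{i-1}}) - \cut(C_i,\{i\})$, expands both terms as sums over hyperedges containing $i$, and then regroups per hyperedge, where each non-extreme vertex of $e$ contributes canceling terms so that only $w_e\big(\max_{i\in e}f_i - \min_{j\in e}f_j\big)$ survives. You instead decompose the set function \emph{first}, $\cut_H(C,\overline{C})=\sum_{e\in E} w_e \hat{S}_e(C)$, invoke linearity of the Lovász extension in the set function (immediate from Definition \ref{def:Lovasz}, since each value $\hat{S}(C_i)$ enters linearly for fixed $f$ and fixed ordering), and compute each single-hyperedge extension $S_e$ via the telescoping form $\sum_{i=1}^{n-1}\hat{S}_e(C_i)(f_{i+1}-f_i)$, using $\hat{S}_e(V)=0$. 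Your bookkeeping is sound: $\hat{S}_e(C_i)=1$ exactly for $a\le i\le b-1$ because the condition depends only on the extreme sorted positions $a,b$ of $e$ and not on intermediate membership, so the sum collapses to $f_b-f_a$. What your approach buys is modularity: all combinatorics is isolated in a one-line telescoping computation for an indicator set function, and the linearity observation is a reusable general principle (any set function that is additive over components has an additive Lovász extension). What the paper's approach buys is that it never needs to state or justify linearity as a separate step, proceeding in a single chain of equalities. The convexity arguments are essentially identical, namely a nonnegative combination of terms $\max_{i\in e}f_i + \max_{j\in e}(-f_j)$, each a sum of pointwise maxima of linear functions; your phrasing is in fact cleaner than the paper's, which contains the typo ``$-\min_{i\in e}f_i = \max_{i\in e}f_i$'' where $\max_{i\in e}(-f_i)$ is meant.
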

%%%%%%%%%%%%%%%%%%%%%%%%%%%%%%%%%%%%%%%%%%%%%%%%%%%%%%%%%%%%%%%%%%%%%%%%%%%%%%%%%%%%%%%%%%%%%%%%%%%%%%%%%%%%%%%%%%
%%%%LONGVERSION%%%%%%%%%%%%%%%%%%%%%%%%%%%%%%%%%%%%%%%%%%%%%%%%%%%%%%%%%%%%%%%%%%%%%%%%%%%%%%%%%%%%%%%%%%%%%%%%%%
%%%%%%%%%%%%%%%%%%%%%%%%%%%%%%%%%%%%%%%%%%%%%%%%%%%%%%%%%%%%%%%%%%%%%%%%%%%%%%%%%%%%%%%%%%%%%%%%%%%%%%%%%%%%%%%%%%
\begin{proof}
Using $C_{i-1}=C_i \cup \{i\} \textrm{ and }\overline{C_i} = \overline{C_{i-1}} \cup \{i\}$ the Lovasz extension can be written as
\begin{align*}
\TV_H(f) &= \sum_{i=1}^n f_i \Big( \cut(C_{i-1},\overline{C_{i-1}}) - \cut(C_i,\overline{C_i})\Big) = \sum_{i=1}^n f_i \Big( \cut(\{i\},\overline{C_{i-1}}) - \cut(C_i,\{i\})\Big)\\
    &= \sum_{i=1}^n f_i \Big(\hspace{-3mm}\sum_{\stackrel{e \in E, i \in e}{e \cap \{1,\ldots,i-1\} \neq \emptyset}} w_e  - 
                                \sum_{\stackrel{e \in E, i \in e}{e \cap \{i+1,\ldots,n\} \neq \emptyset}} w_e\Big)
    = \sum_{e \in E} w_e \Big(\max\limits_{i \in e} f_i - \min\limits_{j \in e} f_j\Big).
\end{align*}
It is easy to see that the Lovasz extension of the hypergraph cut is a convex function. Since the maximum of convex functions is convex, $-\min_{i \in e} f_i = \max_{i \in e} f_i$ and the hyperedge weights are non-negative, we have a non-negative combination of convex functions which is convex. 
Alternatively, one could use that the hypergraph cut is submodular and the Lovasz extension of every submodular set function is convex.
\end{proof}
%%%%%%%%%%%%%%%%%%%%%%%%%%%%%%%%%%%%%%%%%%%%%%%%%%%%%%%%%%%%%%%%%%%%%%%%%%%%%%%%%%%%%%%%%%%%%%%%%%%%%%%%%%%%%%%%%%
%%%%LONGVERSION%%%%%%%%%%%%%%%%%%%%%%%%%%%%%%%%%%%%%%%%%%%%%%%%%%%%%%%%%%%%%%%%%%%%%%%%%%%%%%%%%%%%%%%%%%%%%%%%%%
%%%%%%%%%%%%%%%%%%%%%%%%%%%%%%%%%%%%%%%%%%%%%%%%%%%%%%%%%%%%%%%%%%%%%%%%%%%%%%%%%%%%%%%%%%%%%%%%%%%%%%%%%%%%%%%%%%
Note that the total variation of a hypergraph cut reduces to the total variation on graphs if $H$ is $2$-uniform (standard graph).
There is an interesting relation of the total variation on hypergraphs to sparsity inducing group norms. Namely, defining for each edge $e \in E$
the difference operator $D_e: \R^{|V|} \rightarrow \R^{|V| \times |V|}$ by $(D_e f)_{ij} = f_i - f_j$ if $i,j \in e$ and $0$ otherwise, $\TV_H$ can be written as, $\TV_H(f) =\sum_{e \in E} w_e \norm{D_e f}_\infty$,
which can be seen as inducing group sparse structure on the gradient level. The groups are the hyperedges and thus are typically overlapping. This could lead potentially to 
extensions of the elastic net on graphs to hypergraphs.

It is known that using the total variation on graphs as a regularization functional  in semi-supervised learning (SSL) leads to very spiky solutions for small numbers of labeled points. Thus, one would like to have regularization functionals enforcing more smoothness of the solutions. For graphs this is achieved by using the family of regularization functionals $\Omega_{G,p}: \R^{|V|} \rightarrow \R$,
\[ \Omega_{G,p}(f) = \frac{1}{2}\sum_{i,j=1}^n w_{ij} |f_i - f_j|^p.\]
For $p=2$ we get the regularization functional of the graph Laplacian which is the basis of a large class of methods on graphs. In analogy to graphs, we define a corresponding family on hypergraphs.
\begin{definition}\label{def:regularizers}
The regularization functionals $\Omega_{H,p}:\R^{|V|} \rightarrow \R$ for a hypergraph $H=(V,E,w)$ are defined for $p\geq 1$ as
\[ \Omega_{H,p}(f)= \sum_{e \in E} w_e \Big(\max\limits_{i \in e} f_i - \min\limits_{j \in e} f_j\Big)^p.\]
\end{definition}
%For the special case, $p=1$, $\Omega_{H,1}(f) = \TV_H(f)$.
%A crucial property is the following.
\begin{lemma}
The functionals $\Omega_{H,p}:\R^{|V|} \rightarrow \R$ are convex.
\end{lemma}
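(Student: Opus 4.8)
The plan is to reduce the statement to two elementary facts: that each hyperedge contributes a nonnegative convex function of $f$, and that raising such a function to the power $p\geq 1$ preserves convexity. First I would recall from the proof of Proposition \ref{pro:TV} that for every $e\in E$ the map $g_e(f)=\max_{i\in e}f_i-\min_{j\in e}f_j$ is convex: it is the sum of $\max_{i\in e}f_i$ and $-\min_{j\in e}f_j=\max_{j\in e}(-f_j)$, each a maximum of finitely many linear (hence convex) functions. Moreover $g_e$ is nonnegative, since $\max_{i\in e}f_i\geq\min_{j\in e}f_j$, so $g_e$ maps $\R^{|V|}$ into $[0,\infty)$.

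Next I would invoke the standard composition rule for convex functions: if $\phi:[0,\infty)\to\R$ is convex and non-decreasing and $g:\R^{|V|}\to[0,\infty)$ is convex, then $\phi\circ g$ is convex. Here I take $\phi(t)=t^p$, which for $p\geq 1$ is convex and non-decreasing on $[0,\infty)$. The short verification chains the two properties: for $u,v\in\R^{|V|}$ and $\lambda\in[0,1]$, convexity of $g_e$ gives $g_e(\lambda u+(1-\lambda)v)\leq\lambda g_e(u)+(1-\lambda)g_e(v)$; applying $\phi$ preserves this inequality by monotonicity; and convexity of $\phi$ then bounds the right-hand side by $\lambda\,\phi(g_e(u))+(1-\lambda)\,\phi(g_e(v))$. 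Hence each term $g_e(f)^p$ is convex.

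Finally, since the weights $w_e$ are nonnegative, $\Omega_{H,p}(f)=\sum_{e\in E}w_e\,g_e(f)^p$ is a nonnegative combination of convex functions and is therefore convex. I do not expect a genuine obstacle; the only point requiring care is the nonnegativity of $g_e$, which is precisely what licenses the use of $t\mapsto t^p$ (defined and convex only on $[0,\infty)$ for non-integer $p$) in conjunction with its monotonicity in the composition rule. Dropping either monotonicity or nonnegativity would break the argument, so these are the load-bearing observations.
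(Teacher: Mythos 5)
Your proof is correct and follows essentially the same route as the paper: both establish convexity of each term $\big(\max_{i\in e} f_i - \min_{j\in e} f_j\big)^p$ by composing the convex, nonnegative function $g_e$ with the convex, non-decreasing map $t\mapsto t^p$ on $[0,\infty)$ via the same two-step chain of inequalities, and then conclude using nonnegativity of the weights $w_e$. If anything, you are slightly more explicit than the paper about the monotonicity of $t\mapsto t^p$ and the nonnegativity of $g_e$, which the paper's inequalities use implicitly.
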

%%%%%%%%%%%%%%%%%%%%%%%%%%%%%%%%%%%%%%%%%%%%%%%%%%%%%%%%%%%%%%%%%%%%%%%%%%%%%%%%%%%%%%%%%%%%%%%%%%%%%%%%%%%%%%%%%%
%%%%LONGVERSION%%%%%%%%%%%%%%%%%%%%%%%%%%%%%%%%%%%%%%%%%%%%%%%%%%%%%%%%%%%%%%%%%%%%%%%%%%%%%%%%%%%%%%%%%%%%%%%%%%
%%%%%%%%%%%%%%%%%%%%%%%%%%%%%%%%%%%%%%%%%%%%%%%%%%%%%%%%%%%%%%%%%%%%%%%%%%%%%%%%%%%%%%%%%%%%%%%%%%%%%%%%%%%%%%%%%%
\begin{proof}
The $p$-th power of positive, convex functions for $p\geq 1$ is convex as 
\begin{align*}
\big(f(\lambda x + (1-\lambda)y)\big)^p \leq  \big(\lambda f(x) + (1-\lambda)f(y)\big)^p
                                       \leq \lambda f(x)^p + (1-\lambda)f(y)^p
\end{align*}
where the last inequality follows from the convexity of $x^p$ on $\R_+$. Thus, the $p$-th power
of $\max\limits_{i \in e} f_i - \min\limits_{j \in e} f_j$ is convex.
\end{proof}
%%%%%%%%%%%%%%%%%%%%%%%%%%%%%%%%%%%%%%%%%%%%%%%%%%%%%%%%%%%%%%%%%%%%%%%%%%%%%%%%%%%%%%%%%%%%%%%%%%%%%%%%%%%%%%%%%%
%%%%LONGVERSION%%%%%%%%%%%%%%%%%%%%%%%%%%%%%%%%%%%%%%%%%%%%%%%%%%%%%%%%%%%%%%%%%%%%%%%%%%%%%%%%%%%%%%%%%%%%%%%%%%
%%%%%%%%%%%%%%%%%%%%%%%%%%%%%%%%%%%%%%%%%%%%%%%%%%%%%%%%%%%%%%%%%%%%%%%%%%%%%%%%%%%%%%%%%%%%%%%%%%%%%%%%%%%%%%%%%%
%In analogy to graphs, our main interest for SSL is 
Note that $\Omega_{H,1}(f) = \TV_H(f)$. If $H$ is a graph and $p\geq 1$, $\Omega_{H,p}$ reduces to the Laplacian regularization $\Omega_{G,p}$.
Note that for characteristic functions of sets, $f=\ones_C$, it holds $\Omega_{H,p}(\ones_C)=\cut_H(C,\overline{C})$. Thus, the difference between the hypergraph cut
and its approximations such as clique and star expansion carries over to $\Omega_{H,p}$ and $\Omega_{G_{CE},p}$, respectively.

%%%%%%%%%%%%%%%%%%%%%%%%%%%%%%%%%%%%%%%%%%%%%%%%%%%%%%%%%%%%%%%%%%%%%%%%%%%%%%%%%%%%%%%%%%%%%%%%%%%%%%%%%%%%%%%%%%
%%%%%%%%%%%%%%%%%%%%%%%%%%%%%%%%%%%%%%%%%%%%%%%%%%%%%%%%%%%%%%%%%%%%%%%%%%%%%%%%%%%%%%%%%%%%%%%%%%%%%%%%%%%%%%%%%%
\section{Semi-supervised Learning}\label{sec:SSL}
With the regularization functionals derived in the last section, we can immediately write down a formulation for two-class semi-supervised learning on hypergraphs
similar to the well-known approaches of \cite{BelNiy03b,Zhou2003s}. Given the label set $L$ we construct the vector $Y \in \R^n$ with $Y_i=0$ if $i \notin L$
and $Y_i$ equal to the label in $\{-1,1\}$ if $i \in L$. We propose solving
\begin{equation} \label{SSL_functional}
f^*=\argmin_{f \in \R^{|V|}} \frac{1}{2}\norm{f-Y}^2_2 + \lambda \,\Omega_{H,p}(f),
\end{equation}
where $\lambda> 0$ is the regularization parameter. In Section \ref{sec:opt}, we discuss how this convex optimization problem can be solved efficiently for the case $p=1$ and $p=2$. 
Note, that other loss functions than the squared loss could be used. However, the regularizer aims at contracting
the function and we use the label set $\{-1,1\}$ so that $f^*\in [-1,1]^{|V|}$. Hence, on the interval $[-1,1]$ the squared loss behaves very similar to other margin-based loss functions. 
In general, we recommend using $p=2$ as it corresponds to Laplacian-type regularization for graphs which is known to work well. 
For graphs $p=1$ is known to produce spiky solutions for small numbers of labeled points. This is due to the effect that cutting ``out'' the labeled points leads to a much smaller cut than, e.g., producing a balanced partition. However, in the case where one has only a small number of hyperedges this effect is much smaller
and we will see in the experiments that $p=1$ also leads to reasonable solutions.

\section{Balanced Hypergraph Cuts}\label{sec:BGC}
% motivation: previous formulation of normalized cut of Zhou does not integrate right structure (example of Simon)
% mention previous work of Hein/Setzer/Buehler/Bresson/Szlam on graphs - in particular normalized cut
% theorem: exact relaxation
%  (generalization to formulation of Leordeanu and Sminchisescu possible - maybe discuss)
%
In Section \ref{sec:cuts}, we discussed the difference between the hypergraph cut \eqref{eq:cut-hypergraph} and the graph cut of the clique expansion \eqref{eq:cut-CE} of the hypergraph and gave a simple example in Figure \ref{fig:CutEx} where these cuts yield quite different results. Clearly, this difference carries over to the famous normalized cut
criterion introduced in \cite{SM00,Lux07} for clustering of graphs with applications in image segmentation. For a hypergraph the ratio resp. normalized cut can be formulated as
\[ \Rcut(C,\overline{C}) = \frac{\cut_H(C,\overline{C})}{|C||\overline{C}|}, \quad \Ncut(C,\overline{C})= \frac{\cut_H(C,\overline{C})}{\vol(C)\vol(\overline{C})},\]
which incorporate different balancing criteria. Note, that in contrast to the normalized cut for graphs the normalized hypergraph cut allows \emph{no} relaxation into a 
linear eigenproblem (spectral relaxation).

Thus, we follow a recent line of research \cite{HeiBue2010, SB10, HeiSet2011, BueEtAl2013} where it has been shown that the standard spectral relaxation of the normalized cut used in spectral clustering \cite{Lux07} is loose and that a tight, in fact exact, relaxation can be formulated in terms of a nonlinear eigenproblem. Although nonlinear eigenproblems are non-convex, one can compute nonlinear eigenvectors quite efficiently at the price of loosing global optimality. However, it has been shown that the potentially non-optimal solutions of the exact relaxation, outperform
in practice the globally optimal solution of the loose relaxation, often by large margin. In this section, we extend their approach to hypergraphs and consider general balanced
hypergraph cuts $\mathrm{Bcut}(C,\overline{C})$ of the form, $\mathrm{Bcut}(C,\overline{C}) = \frac{\cut_H(C,\overline{C})}{\hat{S}(C)}$,
where $\hat{S}:2^V \rightarrow \R_+$ is a non-negative, symmetric set function (that is $\hat{S}(C)=\hat{S}(\overline{C})$). For the normalized cut one has $\hat{S}(C)=\vol(C)\vol(\overline{C})$ whereas for the Cheeger cut one has $\hat{S}(C)=\min\{\vol{C},\vol{\overline{C}}\}$. Other examples of balancing functions 
can be found in \cite{HeiSet2011}. Our following result shows that the balanced hypergraph cut also has an exact relaxation into a continuous nonlinear eigenproblem \cite{HeiBue2010}.
\begin{theorem}\label{th:sets}
Let $H=(V,E,w)$ be a finite, weighted hypergraph and $S:\R^{|V|} \rightarrow \R$ be the Lovasz extension of the symmetric, non-negative set function $\hat{S}:2^V \rightarrow \R$. Then, it holds that
\begin{align*} 
\min\limits_{f \in \R^{|V|}} \frac{\sum_{e \in E} w_e \big(\max\limits_{i \in e} f_i - \min\limits_{j \in e} f_j\big)}{S(f)} = \min\limits_{C \subset V} \frac{\cut_H(C,\overline{C})}{\hat{S}(C)}.
\end{align*}
Further, let $f \in \R^{|V|}$ and define $C_t:=\{i \in V\,|\, f_i > t\}$. Then, 
\begin{align*}
 \min\limits_{t \in \R} \frac{\cut_H(C_t,\overline{C_t})}{\hat{S}(C_t)} \leq \frac{\sum_{e \in E} w_e \big(\max\limits_{i \in e} f_i - \min\limits_{j \in e} f_j\big)}{S(f)}.
\end{align*}
\end{theorem}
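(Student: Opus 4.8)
The plan is to prove both claims through a single device, the \emph{co-area (layer-cake) formula} for Lovasz extensions, which states that for any set function $\hat{R}$ with $\hat{R}(\emptyset)=0$ and any \emph{nonnegative} $g \in \R^{|V|}$, its Lovasz extension $R$ satisfies
\[ R(g) = \int_0^\infty \hat{R}\big(\{i \in V : g_i > t\}\big)\,dt. \]
First I would establish this identity directly from Definition \ref{def:Lovasz}: ordering $g_1 \le \cdots \le g_n$, the integrand $t \mapsto \hat{R}(\{g>t\})$ is piecewise constant, equal to $\hat{R}(C_i)$ on each interval $(g_i, g_{i+1})$ and to $\hat{R}(V)$ on $(0,g_1)$, so evaluating the integral reproduces exactly the sum $\sum_{i=1}^{n-1}\hat{R}(C_i)(g_{i+1}-g_i) + g_1\hat{R}(V)$. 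Applying this to $\hat{R}=\cut_H$ and to $\hat{R}=\hat{S}$ gives $\TV_H(g)=\int_0^\infty \cut_H(C_t,\overline{C_t})\,dt$ and $S(g)=\int_0^\infty \hat{S}(C_t)\,dt$ with $C_t=\{g>t\}$.

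Next I would remove the nonnegativity assumption by a shift. Since $\hat{S}$ is symmetric and $\hat{S}(\emptyset)=0$, we have $\hat{S}(V)=0$, and the same holds for $\cut_H$. From the second form in Definition \ref{def:Lovasz} one reads off that adding a constant $c\ones$ to $f$ changes a Lovasz extension by $c\,\hat{R}(V)$, so both $\TV_H$ and $S$ are \emph{invariant} under $f \mapsto f + c\ones$. The family of threshold sets $\{C_t\}_{t\in\R}$ is likewise only reindexed by such a shift. Hence in proving both statements I may assume $f > 0$ after adding a large enough constant, so that the co-area formula applies verbatim.

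With these tools, the thresholding inequality is the main step. Set $\mu := \min_{t} \cut_H(C_t,\overline{C_t})/\hat{S}(C_t)$, the minimum taken over thresholds with $\hat{S}(C_t) > 0$. By definition of $\mu$ and nonnegativity of $\cut_H$ and $\hat{S}$, the pointwise bound $\cut_H(C_t,\overline{C_t}) \ge \mu\,\hat{S}(C_t)$ holds for \emph{every} $t$ (the cases $\hat{S}(C_t)=0$ being trivial since $\mu \ge 0$). Integrating over $t \in (0,\infty)$ and invoking the co-area formula yields $\TV_H(f) \ge \mu\, S(f)$, and dividing by $S(f) > 0$ gives exactly the claimed inequality. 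This is the elementary estimate that a ratio of integrals dominates the minimum of the pointwise ratios, and the only care needed is the treatment of thresholds where the denominator vanishes and of constant $f$ (for which $S(f)=0$ and the ratio is undefined, hence excluded from the minimization).

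Finally the equality follows by combining two opposite bounds. For the direction ``$\le$'', I plug the indicator $f=\ones_{C^*}$ of an optimal set $C^*$ into the left-hand ratio; since $\TV_H(\ones_C)=\cut_H(C,\overline{C})$ (Proposition \ref{pro:TV}) and $S(\ones_C)=\hat{S}(C)$ (Definition \ref{def:Lovasz}), the continuous minimum is at most the discrete one. For the direction ``$\ge$'', the thresholding inequality shows that for \emph{every} $f$ the ratio $\TV_H(f)/S(f)$ is at least $\min_t \cut_H(C_t,\overline{C_t})/\hat{S}(C_t) \ge \min_{C\subset V}\cut_H(C,\overline{C})/\hat{S}(C)$, so the continuous minimum is at least the discrete one. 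The main obstacle is really only the clean proof of the co-area formula and the bookkeeping of degenerate denominators; once those are in place, both statements are immediate.
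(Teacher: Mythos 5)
Your proof is correct, and it takes a genuinely different route from the paper: where the paper's argument is essentially a citation, yours is self-contained. The paper's proof consists of two observations --- the numerator is the Lovasz extension of $\cut_H$ (Proposition \ref{pro:TV}), and both $\cut_H$ and $\hat{S}$ vanish on the full set $V$ --- after which the entire equivalence between the set problem and the continuous problem is delegated to the general exact-relaxation theorem of \cite{BueEtAl2013}, invoked as a black box. You instead reprove that result in the case at hand: the co-area identity $R(g)=\int_0^\infty \hat{R}(\{g>t\})\,dt$ for nonnegative $g$ (which follows correctly from the second form in Definition \ref{def:Lovasz}), shift invariance of $\TV_H$ and $S$ coming from $\hat{S}(V)=0$ and $\cut_H(V,\emptyset)=0$ (this is exactly where the paper's remark about vanishing on $V$ does its work), the integrated pointwise bound $\cut_H(C_t,\overline{C_t})\ge \mu\,\hat{S}(C_t)$ giving $\TV_H(f)\ge \mu\, S(f)$, and the comparison with indicators $\ones_{C^*}$ for the reverse direction. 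Your handling of the degenerate cases (thresholds with $\hat{S}(C_t)=0$, vectors with $S(f)=0$) is the right bookkeeping and is consistent with the implicit conventions of the theorem statement. A structural difference worth noting: in your argument the thresholding inequality is the \emph{central} estimate, from which the equality of minima falls out in two lines, whereas in the paper both claims arrive together from the citation. What the paper's route buys is brevity and generality --- the theorem of \cite{BueEtAl2013} covers arbitrary non-negative set-function ratios once and for all, so nothing needs to be re-derived for other balancing functions. What your route buys is a complete, citation-free proof whose mechanism is transparent: it shows precisely why ``optimal thresholding'' of any $f$ can only decrease the balanced hypergraph cut, which is the property the paper relies on when turning nonlinear eigenvectors into partitions.
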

%%%%%%%%%%%%%%%%%%%%%%%%%%%%%%%%%%%%%%%%%%%%%%%%%%%%%%%%%%%%%%%%%%%%%%%%%%%%%%%%%%%%%%%%%%%%%%%%%%%%%%%%%%%%%%%%%%
%%%%LONGVERSION%%%%%%%%%%%%%%%%%%%%%%%%%%%%%%%%%%%%%%%%%%%%%%%%%%%%%%%%%%%%%%%%%%%%%%%%%%%%%%%%%%%%%%%%%%%%%%%%%%
%%%%%%%%%%%%%%%%%%%%%%%%%%%%%%%%%%%%%%%%%%%%%%%%%%%%%%%%%%%%%%%%%%%%%%%%%%%%%%%%%%%%%%%%%%%%%%%%%%%%%%%%%%%%%%%%%%
\begin{proof}
By Prop. \ref{pro:TV} the Lovasz extension of $\cut_H(C,\overline{C})$ is given by $\sum_{e \in E} w_e \big(\max\limits_{i \in e} f_i - \min\limits_{j \in e} f_j\big)$. 
%Then it is clear that the continuous problem is a relaxation of the set problem. 
Noting that both $\cut_H(C,\overline{C})$ and ${\hat{S}(C)}$ vanish on the full set $V$, 
the proof then follows from the recent result \cite{BueEtAl2013}, which shows in this case the equivalence between the set problem and the
continuous problem written in terms of the Lovasz extensions.
\end{proof}
%%%%%%%%%%%%%%%%%%%%%%%%%%%%%%%%%%%%%%%%%%%%%%%%%%%%%%%%%%%%%%%%%%%%%%%%%%%%%%%%%%%%%%%%%%%%%%%%%%%%%%%%%%%%%%%%%%
%%%%LONGVERSION%%%%%%%%%%%%%%%%%%%%%%%%%%%%%%%%%%%%%%%%%%%%%%%%%%%%%%%%%%%%%%%%%%%%%%%%%%%%%%%%%%%%%%%%%%%%%%%%%%
%%%%%%%%%%%%%%%%%%%%%%%%%%%%%%%%%%%%%%%%%%%%%%%%%%%%%%%%%%%%%%%%%%%%%%%%%%%%%%%%%%%%%%%%%%%%%%%%%%%%%%%%%%%%%%%%%%
The last part of the theorem shows that ``optimal thresholding'' (turning $f \in \R^V$ into a partition) among all level sets of any $f \in \R^{|V|}$ can only lead to a better or equal balanced hypergraph cut.

The question remains how to minimize the ratio $Q(f) = \frac{\TV_H(f)}{S(f)}$. As discussed in \cite{HeiSet2011}, every Lovasz extension $S$ can be written as a difference 
of convex positively $1$-homogeneous functions\footnote{A function $f:\R^d \rightarrow \R$ is positively $1$-homogeneous if $\forall \alpha>0$, $f(\alpha x)=\alpha f(x)$.} $S=S_1-S_2$. 
Moreover, as shown in Prop. \ref{pro:TV} the total variation $\TV_H$ is convex. Thus, we have to minimize a non-negative ratio of a convex and a difference of convex (d.c.) function. We employ the RatioDCA algorithm \cite{HeiSet2011} shown in Algorithm \ref{alg:RDCA}.
\begin{algorithm}[htb]
   \caption{\label{alg:RDCA}{\bf RatioDCA} -- Minimization of a non-negative ratio of $1$-homogeneous d.c. functions}
\begin{algorithmic}[1]
   \STATE {\bfseries Objective:}  $Q(f)=\frac{R_1(f)-R_2(f)}{S_1(f)-S_2(f)}$. {\bfseries Initialization:} $f^0 = \text{random}$ with $\norm{f^0} = 1$, $\lambda^0 =Q(f^0)$
   \REPEAT
  \STATE  $s_1(f^k) \in \partial S_1(f^k)$, $r_2(f^k) \in \partial R_2(f^k)$
   \STATE $f^{k+1} = \argmin_{\norm{u}_2 \leq 1} \left\{ R_1(u) - \inner{u,r_2(f^k)} + \lambda^k \,\big(S_2(u) - \inner{u, s_1(f^k)}\big)   \right\}$ 
   			%\hspace{1cm} \text{where\ } $s_1(f^k) \in \partial S_1(f^k)$, $r_2(f^k) \in \partial R_2(f^k)$ with $\inner{\ones,s_1(f^k)}=\inner{\ones,r_2(f^k)}=0$.
   \STATE $\lambda^{k+1}= (R_1(f^{k+1})-R_2(f^{k+1}))/(S_1(f^{k+1})-S_2(f^{k+1}))$
	\UNTIL $\frac{\abs{\lambda^{k+1}-\lambda^k}}{\lambda^k}< \epsilon$
	\STATE {\bfseries Output:} eigenvalue $\lambda^{k+1}$ and eigenvector $f^{k+1}$.
\end{algorithmic}
\end{algorithm}
The main part is the convex inner problem. In our case $R_1=TV_H, R_2=0$, and thus the inner problem reads %it reads
\begin{align} \label{inner_problem}
 \min_{\norm{u}_2 \leq 1} \{ \TV_H(u) + \lambda^k \,\big(S_2(u) - \inner{u, s_1(f^k)}\big) \}.
\end{align}
For simplicity we restrict ourselves to submodular balancing functions, in which
case $S$ is convex and thus $S_2=0$. For the general case, see \cite{HeiSet2011}. Note that the balancing functions of ratio/normalized cut and Cheeger cut are submodular. It turns out that the inner problem is very similar to the semi-supervised learning formulation \eqref{SSL_functional}. The efficient solution of both problems is discussed next.

%%%%%%%%%%%%%%%%%%%%%%%%%%%%%%%%%%%%%%%%%%%%%%%%%%%%%%%%%%%%%%%%%%%%%%%%%%%%%%%%%%%%%%%%%%%%%%%%%%%%%%%%%%%%%%%%%%
%%%%%%%%%%%%%%%%%%%%%%%%%%%%%%%%%%%%%%%%%%%%%%%%%%%%%%%%%%%%%%%%%%%%%%%%%%%%%%%%%%%%%%%%%%%%%%%%%%%%%%%%%%%%%%%%%%
\section{Algorithms for the Total Variation on Hypergraphs}\label{sec:opt}

The problem \eqref{SSL_functional} we want to solve for semi-supervised learning and the inner problem \eqref{inner_problem} of RatioDCA have a common structure. They are the sum of convex functionals where one of them is the novel regularizer $\Omega_{H,p}$. We propose to solve these problems using a primal-dual algorithm, denoted PDHG in this paper, which was proposed in \cite{EssZhaCha10,ChaPoc11}. Its main idea is to iteratively solve for each convex term in the objective function a so-called proximal problem. Solving the proximal problem w.r.t. a mapping $g:\R^n\rightarrow \R$ and a vector $\tilde{x}\in \R^n$ means to compute the proximal map ${\rm prox}_{g}$ defined by
\[
 {\rm prox}_g(\tilde{x}) = \argmin_{x\in \R^n}\{ \frac12\|x-\tilde{x}\|_2^2 + g(x)\}.
\]
The main idea here is that often these proximal problems can be solved efficiently leading to a fast convergence of the overall algorithm. In order to point out the common structure of PDHG for both \eqref{SSL_functional} and the inner problems of Algorithm \ref{alg:RDCA}, we first consider a general optimization problem of the form
\begin{equation} \label{G_FK}
 \min_{f\in \R^n} \{G(f) + F(Kf) \},
\end{equation}
where $K \in \R^{m,n}$ and $G:\R^n \rightarrow \R, F:\R^m \rightarrow \R$ are lower-semicontinuous convex functions. Recall that the conjugate function of $G^*$ of $G$ is defined as
\[
 G^*(x) = \sup_{f\in \R^n} \{ \lb x,f \rb - G(f) \}
\]
and similarly for $F^*$. In terms of these conjugate functions, we can write the dual problem of \eqref{G_FK} as
\begin{equation}\label{gen_dual}
 -\min_{\alpha \in \R^m} \{ G^*(-K^\tT \alpha) + F^*(\alpha) \}.
\end{equation}
The PDHG algorithm for \eqref{G_FK} has the following general form. For convergence proofs we refer to \cite{EssZhaCha10,ChaPoc11}.
\begin{algorithm}[htb]
   \caption{{\bf PDHG}}
\begin{algorithmic}[1]
   %\STATE {\bfseries Input:}  accuracy $\epsilon$
   \STATE {\bfseries Initialization:} $f^{(0)} = \bar{f}^{(0)} = 0$, $\theta \in [0,1]$, $\sigma,\tau >0$ with $\sigma \tau < 1/\|K\|_2^2$
   \REPEAT
   \STATE $\alpha^{(k)} = {\rm prox}_{\sigma F^*} (\alpha^{(k)} + \sigma K \bar{f}^{(k)})$
   \STATE $f^{(k+1)} = {\rm prox}_{\tau G(f)}(f^{(k)}-\tau K^\tT(\alpha^{(k)}))$
   \STATE $\bar{f}^{(k+1)} = f^{(k+1)} + \theta(f^{(k+1)}-f^{(k)})$
	\UNTIL $\mbox{relative duality gap} < \epsilon$
	\STATE {\bfseries Output:} $f^{(k+1)}$.
\end{algorithmic}
\end{algorithm}

We will now apply this general setting to the convex optimization problems arising in this paper. 
First, the following Table \ref{G_functionals} shows how one can choose $G$ in \eqref{G_FK} in order to solve \eqref{SSL_functional} and \eqref{inner_problem}, provides the solutions of the corresponding proximal problems, and gives the conjugate functions. However, note that smooth convex terms
can also be directly exploited \cite{Con2013}.
Note that we write the constraint in the inner problem of RatioDCA via the indicator function $\iota_{\|\cdot\|_2\le 1}$ defined by $\iota_{\|\cdot\|_2\le 1}(x) = 0$, if $\|x\|_2\le 1$ and $+\infty$ otherwise. 
Clearly, both proximal problems have an explicit solution. 

\begin{table}[h!]
\centering
\begin{tabular}{c|c}
\hline
 $G(f) = \frac12\|f-Y\|_2^2$ & $G(f) = -\lb s_1(f^k),f \rb + \iota_{\|\cdot\|_2\le 1}(f)$ \\
\hline
 $\prox_{\tau G(f)}(\tilde{x}) = \frac{1}{1+\tau}(\tilde{x}+\tau Y)$ & $\prox_{\tau G(f)}(\tilde{x}) = \frac{\tilde{x}+\tau s_1(f^k)}{\max\{1,\|\tilde{x}+\tau s_1(f^k)\|_2\}}$ \\
\hline
 $G^*(x) = \frac12 \|x+Y\|^2_2 - \frac12\|Y\|_2^2$ & $G^*(x) = \|x+s_1(f^k)\|_2$\\
\hline
\end{tabular}
\caption{Data terms of the SSL functional \eqref{SSL_functional} (left) and the inner problem of RatioDCA \eqref{inner_problem} (right) with respective proximal map and conjugate. \label{G_functionals}}
\end{table}

Second, we discuss the choice of $F$ and $K$ to incorporate $\Omega_{H,p}$.
%We distinguish two cases here, $p=1$ and $p=2$. However, the results for $p=2$ carry over to all $p>1$.

\paragraph{PDHG algorithm for $\Omega_{H,1}$.}

Let $m_e$ denote the number of vertices in hyperedge $e\in E$. The main idea is to write
\begin{equation} \label{TV_PDHG}
\lambda\Omega_{H,1}(f) = F(Kf) := \sum_{e\in E} (F_{(e,1)} (K_e f) + F_{(e,2)} (K_e f)),
\end{equation}
where the rows of the matrices $K_e \in \R^{m_e,n}$ are the $i$-th standard unit vectors for $i \in e$ and the functionals $F_{(e,j)}:\R^{m_e}\rightarrow \R$ are defined as
\[
 F_{(e,1)}(\alpha^{(e,1)}) = \lambda w_e \max(\alpha^{(e,1)}),\quad F_{(e,2)}(\alpha^{(e,2)}) = -\lambda w_e \min(\alpha^{(e,2)}).
\]
The primal problem has thus the form
\[
 \min_{f\in \R^n} \{G(f) + \sum_{e\in E} (F_{(e,1)} (K_e f) + F_{(e,2)} (K_e f)) \}.
\]
In contrast to the function $G$, we need in the PDHG algorithm the proximal maps for the conjugate functions of $F_{(e,j)}$. They are given by
\[
 F^*_{(e,1)} = \iota_{S_{\lambda w_e}}, \quad F^*_{(e,2)} = \iota_{-S_{\lambda w_e}},
\]
where $S_{\lambda w_e} = \{x\in \R^{m_e}\,:\,\sum_{i=1}^{m_e} x_i = \lambda w_e,\, x_i \ge 0\}$ is the scaled simplex in $\R^{m_e}$. By \eqref{gen_dual} the dual problem has the form 
\begin{align*}
 -\min_{\alpha^{(e,1)},\alpha^{(e,2)}} \{ G^*(-\sum_{e\in E} K^\tT_e (\alpha^{(e,1)} + \alpha^{(e,2)} )) + \sum_{e\in E} ( \iota_{S^e_{\lambda w_e}}(\alpha^{(e,1)}) + \iota_{-S^e_{\lambda w_e}}(\alpha^{(e,2)}) ) \},
\end{align*}
where $G^*$ is given as in Table \ref{G_functionals}. The solutions of the proximal problems for $F^*_{(e,1)}$ and $F^*_{(e,1)}$ are the orthogonal projections onto these simplexes written here as $P_{S^e_{\lambda w_e}}$ and $P_{-S^e_{\lambda w_e}}$, respectively. These projections can be performed in linear time, cf., \cite{Kiwiel2007}.

Using the proximal mappings we have presented so far, we obtain Algorithm \ref{algo:PDHG}.
\begin{algorithm}[h!]
   \caption{{\bf PDHG for $\Omega_{H,1}$} \label{algo:PDHG}}
\begin{algorithmic}[1]
   %\STATE {\bfseries Input:}  accuracy $\epsilon$
   \STATE {\bfseries Initialization:} $f^{(0)} = \bar{f}^{(0)} = 0$, $\theta \in [0,1]$, $\sigma,\tau >0$ with $\sigma \tau < 1/(2\max_{i=1,\dots,n}\{ c_i \})$
   \REPEAT
   \STATE ${\alpha^{(e,1)}}^{(k+1)} = P_{S^e_{\lambda w_e}}({\alpha^{(e,1)}}^{(k)} + \sigma K_e \bar{f}^{(k)}),\quad e \in E$
   \STATE ${\alpha^{(e,2)}}^{(k+1)} = P_{-S^e_{\lambda w_e}}({\alpha^{(e,2)}}^{(k)} + \sigma K_e \bar{f}^{(k)}),\quad e \in E$
   \STATE $f^{(k+1)} = \prox_{\tau G}(f^{(k)}-\tau\sum_{e\in E} K_e^\tT({\alpha^{(e,1)}}^{(k+1)}+{\alpha^{(e,2)}}^{(k+1)}))$
   \STATE $\bar{f}^{(k+1)} = f^{(k+1)} + \theta(f^{(k+1)}-f^{(k)})$
	\UNTIL $\mbox{relative duality gap}< \epsilon$
	\STATE {\bfseries Output:} $f^{(k+1)}$.
\end{algorithmic}
\end{algorithm}
In line 1, $c_i=\sum_{e \in E} H_{i,e}$ is the number of hyperedges the vertex $i$ lies in. The bound on the product of the step sizes can be derived as follows
\[
\|K\|_2^2 = \|K^\tT K\|_2 = 2\|\sum_{e \in E} K_e^\tT K_e\|_2 = 2\max_{i=1,\dots,n}\{ c_i \}.
\]
It is important to point out here that the algorithm decouples the problem in the sense that in every iteration we solve subproblems which treat the functionals $G,F_{(e,1)},F_{(e,2)}$ separately and thus can be solved in an efficient way. 

\paragraph{PDHG algorithm for $\Omega_{H,2}$.} 
We define $G$ and $K_e$ as above. Moreover, we set
\begin{equation} \label{def_h}
 F_{e}(\alpha^e) = \lambda w_e \underbrace{(\max(\alpha^e)-\min(\alpha^e))^2}_{=:h_e(\alpha^e)}.
\end{equation}
Hence, the primal problem can be written as
\[
 \min_{f\in \R^n} \{ G(f) + \sum_{e\in E} F_e (K_e f) \}.
\]
In order to formulate the dual problem, we need the conjugate of $F_e$. To this end, we first derive the conjugate function of $h_e$ defined in \eqref{def_h}, i.e.,
\[
 h_e^*(\alpha^e) = \sup_{\phi \in \R^{m_e}}\{ \lb \alpha^e,\phi \rb - (\max(\phi)-\min(\phi))^2 \}.
\]
\begin{lemma}
Let $\alpha^e \in \R^{m_e}$ and $t_+ = \sum_{i:\alpha^e_i > 0} \alpha^e_i$ and $t_- = \sum_{i:\alpha^e_i < 0} \alpha^e_i$. %If $t_+ = -t_i$ we write $t$ for this value. 
It holds that
\[
 h_e^*(\alpha^e) = \left\{ \begin{array}{ll} \frac14 t_+^2 & \mbox{if } \inner{\alpha^e,\ones}=0, \\ 
 +\infty  & \mbox{otherwise.} \end{array}\right.
\] 
\end{lemma}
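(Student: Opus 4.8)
The plan is to use the invariance of $h_e$ under adding constant vectors to first dispose of the case $\inner{\alpha^e,\ones}\neq 0$, and then to reduce the remaining $m_e$-dimensional supremum to a one-dimensional concave maximization. The starting observation is that $h_e(\phi + c\ones) = h_e(\phi)$ for all $c \in \R$, because translating $\phi$ by $c\ones$ shifts $\max(\phi)$ and $\min(\phi)$ equally and leaves their difference unchanged. Restricting the supremum defining $h_e^*$ to the line $\{c\ones : c \in \R\}$ therefore gives $\sup_{c} c\inner{\alpha^e,\ones}$, which equals $+\infty$ as soon as $\inner{\alpha^e,\ones}\neq 0$; this yields the second branch of the claimed formula.

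For the first branch, I would assume $\inner{\alpha^e,\ones}=0$. Given an arbitrary $\phi$, set $a = \min(\phi)$ and $r = \max(\phi)-\min(\phi)\ge 0$, so that $\phi - a\ones \in [0,r]^{m_e}$. Using $\inner{\alpha^e,\ones}=0$ I would rewrite $\inner{\alpha^e,\phi} = \inner{\alpha^e, \phi - a\ones}$ and bound it by keeping only the coordinates where $\alpha^e_i>0$ and using $0 \le (\phi-a\ones)_i \le r$; this gives $\inner{\alpha^e,\phi} \le r\, t_+$ with $t_+ = \sum_{i:\alpha^e_i>0}\alpha^e_i$. Hence every $\phi$ contributes at most $r t_+ - r^2$ to the supremum, and the problem collapses to $\sup_{r\ge 0}\{r t_+ - r^2\}$.

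The one-dimensional maximization of the concave quadratic $r \mapsto r t_+ - r^2$ is attained at $r = t_+/2$ (note $t_+\ge 0$) with value $\tfrac14 t_+^2$, giving the upper bound $h_e^*(\alpha^e)\le \tfrac14 t_+^2$. To show tightness I would exhibit a maximizer: take $\phi_i = t_+/2$ when $\alpha^e_i > 0$ and $\phi_i = 0$ otherwise; since $\inner{\alpha^e,\ones}=0$ forces the presence of both strictly positive and strictly negative entries whenever $t_+>0$, this $\phi$ has range exactly $r=t_+/2$ and realizes $\inner{\alpha^e,\phi} = r t_+$, so it attains $\tfrac14 t_+^2$ (the degenerate case $t_+=0$ forces $\alpha^e=0$, and both sides vanish). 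The only real subtlety, and the step I would be most careful about, is the reduction $\inner{\alpha^e,\phi}\le r t_+$: it is precisely the normalization $\inner{\alpha^e,\ones}=0$ that lets me discard the negative part of $\alpha^e$ at no cost and replace the full vector $\phi$ by its scalar range, turning the genuinely $m_e$-dimensional objective into a one-dimensional one. Finally I would remark that, since $\inner{\alpha^e,\ones}=0$ implies $t_+ = -t_-$, the value $\tfrac14 t_+^2$ could equally be written as $\tfrac14 t_-^2$, which is why both quantities appear in the statement.
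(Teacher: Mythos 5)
Your proof is correct and follows essentially the same route as the paper's: both use the translation invariance $h_e(\phi+c\ones)=h_e(\phi)$ to dispose of the case $\inner{\alpha^e,\ones}\neq 0$, and both reduce the case $\inner{\alpha^e,\ones}=0$ to the one-dimensional concave maximization $\sup_{r\ge 0}\{t_+ r - r^2\}=\tfrac14 t_+^2$ over the range of $\phi$. Your upper-bound-plus-explicit-maximizer presentation is in fact a slightly more careful rendering of the step the paper asserts as ``clearly best'' (whose stated assignment of $a$ and $b$ to $I_-$ and $I_+$ contains a sign slip), so nothing further is needed.
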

\begin{proof}
Using the decomposition, $\phi=\psi+\gamma \ones$, where $\inner{\psi,\ones}=0$ and $\gamma \in \R$, we can write
\[ \inner{\alpha^e,\phi} - (\max(\phi)-\min(\phi))^2 = \gamma \,\inner{\alpha^e,\ones} + \inner{\alpha^e,\psi} - (\max(\psi)-\min(\psi))^2.\]
Thus for $\inner{\alpha^e,\ones}\neq 0$, we have $h_e^*(\alpha^e)=\infty$. Now we consider the case where $\inner{\alpha^e,\ones}=0$.
We write $I_- = \{ i\,:\,\alpha^e_i < 0 \}$ and $I_+ = \{ i\,:\,\alpha^e_i > 0 \}$ and define $t_+ = \sum_{i \in I_+} \alpha^e_i$ and $t_-=\sum_{i \in I_-} \alpha^e_i$. Note that  $\inner{\alpha^e,\ones}=0$ implies $t_+ = - t_-$.
Let us assume $a = \max(\phi)$ and $b = \min(\phi)$ are fixed. To maximize $\lb \alpha^e,\phi \rb - (\max(\phi)-\min(\phi))^2$ it is clearly best to choose $\phi_i = a$ for $i \in I_-$  and $\phi_i = b$ for $i \in I_+$. Consequently,
\begin{equation} \label{case3}
 \lb \alpha^e,\phi \rb - (\max(\phi)-\min(\phi))^2 = t_+(b-a) - (b-a)^2.
\end{equation}
We maximize the gap $\Delta = b-a$ for the objective $m(\Delta)= t_+\Delta - \Delta^2$ and obtain the maximizer as $\Delta = \frac{t_+}{2}$. Thus
we have $h_e^*(\alpha^e)=\frac{t_+^2}{4}$ if $\inner{\alpha^e,\ones}\neq 0$.
%W.l.o.g. we assume that $\alpha^e$ is ordered, i.e., $\alpha^e_1 \le \cdots \le \alpha^e_n$. Moreover, we write $I_- = \{ i\,:\,\alpha^e_i < 0 \}$ and $I_+ = \{ i\,:\,\alpha^e_i > 0 \}$.\\[1ex]
%Case I ($t_+ > -t_-$): Let us choose $\phi^{(k)}_i = k$, then clearly 
%\[
% \lim_{k \rightarrow \infty} \lb \alpha^e,\phi^{(k)} \rb - (\max(\phi^{(k)})-\min(\phi^{(k)}))^2 = \lim_{k \rightarrow \infty} \underbrace{k(t_+ + t_-)}_{>0} = \infty
%\]
%and thus $h_e^*(\alpha^e) = \infty$.\\[1ex]
%Case II ($t_+ < -t_-$): Similarly to Case I, we choose $\phi^{(k)}_i = -k$ and obtain $h_e^*(\alpha^e) = \infty$.\\[1ex]
%Case III ($t_+ = -t_-$): Let us assume $a = \max(\phi)$ and $b = \min(\phi)$ are fixed. To maximize $\lb \alpha^e,\phi \rb - (\max(\phi)-\min(\phi))^2$ it is clearly best to choose $\phi_i = a$ for $i \in I_-$  and $\phi_i = b$ for $i \in I_+$. Consequently,
%\begin{equation} \label{case3}
% \lb \alpha^e,\phi \rb - (\max(\phi)-\min(\phi))^2 = t_+(b-a) - (b-a)^2.
%\end{equation}
%The partial derivative of \eqref{case3} with respect to both $b$ and $a$ yields
%\[
% \frac12 t_+ = b-a.
%\]
%Plugging this into \eqref{case3}, we obtain $h_e^*(\alpha^e) = t_+(\frac12 t_+) - (\frac12)^2 = \frac14 t_+^2$.
\end{proof}

With $t_+ = \sum_{i:\alpha^e_i > 0} \alpha^e_i$ and $t_- = \sum_{i:\alpha^e_i < 0} \alpha^e_i$ we thus get
\begin{align} \label{conjugate_Fe}
 F^*_{e}(\alpha^e) = \lambda \,w_e \,h^*\left(\frac{\alpha^e}{\lambda w_e}\right) = \left\{ \begin{array}{ll} \frac{1}{4\lambda w_e} t_+^2 & \mbox{if } t_+ = -t_-, \\ +\infty  & \mbox{otherwise.} \end{array}\right.
\end{align}

So, we obtain the dual problem
\begin{align*}
 -\min_{\alpha^e} \{ G^*(-\sum_{e \in E} K^\tT_e \alpha^e) + \sum_{e\in E} \frac{1}{4\lambda w_e} (t^e_+)^2 + \sum_{e\in E} \iota_{\{0\}} (t^e_+ + t^e_-)  \},
\end{align*}
where $t^e_+ =  \sum_{i:\alpha^e_i > 0} \alpha^e_i$ and $t^e_- =  \sum_{i:\alpha^e_i < 0} \alpha^e_i$.

As we have seen in \eqref{conjugate_Fe}, the conjugate functions $F^*_e$ are not indicator functions and we thus solve the corresponding proximal problems via proximal problems for $F_e$. More specifically, we exploit the fact that
\begin{equation} \label{prox_prox}
 {\rm prox}_{\sigma F_e^*}(\tilde{\alpha}^e) = \tilde{\alpha}^e - {\rm prox}_{\frac{1}{\sigma}F_e} (\tilde{\alpha}^e),
\end{equation}
see \cite[Lemma 2.10]{ComWaj2005}, and use the following novel result concerning the proximal problem on the right-hand side of \eqref{prox_prox}.
\begin{proposition} \label{runtime_prox}
For any $\sigma>0$ and any $\tilde{\alpha}^e\in \R^{m_e}$ the proximal map
\[
 {\rm prox}_{\frac{1}{\sigma}F_e} (\tilde{\alpha}^e) = \argmin_{\alpha^e \in \R^{m_e}} \{ \frac12 \| \alpha^e - \tilde{\alpha}^e \|_2^2 + \frac{1}{\sigma} \lambda w_e (\max(\alpha^e)-\min(\alpha^e))^2 \}
\]
can be computed with ${\cal O} (m_e\log m_e)$ arithmetic operations.
\end{proposition}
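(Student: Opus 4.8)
The plan is to reduce the prox to a two-dimensional convex problem in the largest and smallest coordinate of the minimizer, and then solve that problem exactly by a sort followed by a linear sweep. Throughout I write $\alpha=\alpha^e$, $\tilde\alpha=\tilde\alpha^e$, $m=m_e$ and $\beta=\lambda w_e/\sigma$, so the objective is $J(\alpha)=\tfrac12\norm{\alpha-\tilde\alpha}_2^2+\beta(\max(\alpha)-\min(\alpha))^2$.

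\textbf{Step 1 (reduction to the extreme values).} Since the regularizer sees $\alpha$ only through $a:=\max_i\alpha_i$ and $b:=\min_i\alpha_i$, I would introduce auxiliary scalars $a\ge b$ and the box-constrained problem $\min\{\tfrac12\norm{\alpha-\tilde\alpha}_2^2+\beta(a-b)^2 : b\le\alpha_i\le a\ \forall i\}$. A two-line argument shows it has the same optimal value as $J$: a minimizer of $J$ with $a=\max\alpha,\,b=\min\alpha$ is box-feasible of equal value, giving ``$\le$'', while any box-feasible triple has $\max(\alpha)-\min(\alpha)\le a-b$ and hence $J(\alpha)$ at most the box objective, giving ``$\ge$''. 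For fixed $(a,b)$ the inner problem is separable and minimized by the clipping $\alpha_i=\min\{a,\max\{b,\tilde\alpha_i\}\}$. Substituting back reduces everything to minimizing the convex, piecewise-quadratic function
\[
 \Phi(a,b)=\tfrac12\!\!\sum_{i:\tilde\alpha_i>a}\!\!(\tilde\alpha_i-a)^2+\tfrac12\!\!\sum_{i:\tilde\alpha_i<b}\!\!(\tilde\alpha_i-b)^2+\beta(a-b)^2
\]
over $a\ge b$, after which $\alpha$ is recovered by one clipping pass in $\mathcal{O}(m)$.

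\textbf{Step 2 (stationarity and a monotone parametrization).} Setting $\partial_a\Phi=\partial_b\Phi=0$ yields $g'(a)=-2\beta(a-b)$ and $h'(b)=2\beta(a-b)$, where $-g'(a)=\sum_{i:\tilde\alpha_i>a}(\tilde\alpha_i-a)$ and $h'(b)=\sum_{i:\tilde\alpha_i<b}(b-\tilde\alpha_i)$. The key idea is to parametrize by the gap $\Delta=a-b\ge0$. For fixed $\Delta$ the first equation determines a unique $a(\Delta)$ because $-g'$ is continuous and nonincreasing in $a$, and likewise $b(\Delta)$ from the second; moreover $a(\Delta)$ is nonincreasing and $b(\Delta)$ nondecreasing, with $a(0)=\max_i\tilde\alpha_i$ and $b(0)=\min_i\tilde\alpha_i$. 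Hence $\Psi(\Delta):=a(\Delta)-b(\Delta)$ is nonincreasing while the right-hand side $\Delta$ is strictly increasing, so the consistency equation $\Psi(\Delta)=\Delta$ has a unique root $\Delta^*$, giving the optimal $(a^*,b^*)$.

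\textbf{Step 3 (the $\mathcal{O}(m\log m)$ computation).} First sort $\tilde\alpha$ in decreasing order and precompute prefix and suffix sums, at cost $\mathcal{O}(m\log m)$. Once sorted, the active sets $\{i:\tilde\alpha_i>a\}$ and $\{i:\tilde\alpha_i<b\}$ are initial segments, so $a(\Delta)$ and $b(\Delta)$ are explicit piecewise-linear functions whose breakpoints occur exactly when $a$ passes a sorted value from the top and when $b$ passes one from the bottom, each in sorted order. As $\Delta$ increases these thresholds move monotonically, so I would sweep $\Delta$ upward with two pointers into the sorted array, updating one prefix/suffix sum in $\mathcal{O}(1)$ per breakpoint and solving the resulting $2\times2$ linear system for $(a,b)$ on each piece, stopping where the piecewise-linear $\Psi$ crosses the line $\Delta$. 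This touches each coordinate a constant number of times, so the sweep is $\mathcal{O}(m)$ and the total cost is dominated by the sort. The routine parts are the convex reduction of Step~1 and the $2\times2$ solves; the main obstacle is Step~3, namely proving that the clip counts evolve monotonically in $\Delta$ so that a single two-pointer sweep suffices rather than an $\mathcal{O}(m^2)$ search over all pairs of counts — this monotonicity, together with careful handling of ties and of the boundary cases $a=\max_i\tilde\alpha_i$ or $b=\min_i\tilde\alpha_i$ where one side is unclipped, is exactly what turns the problem from quadratic to log-linear.
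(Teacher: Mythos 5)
Your proposal is correct and takes essentially the same route as the paper's proof: you reduce the prox to the two clipping thresholds (your $(a,b)$, the paper's $(r,s)$ in \eqref{prox_rs}), exploit the piecewise-linear structure of the stationarity conditions after sorting, sweep the breakpoints with monotonically advancing pointers, and finish with a closed-form solve on the final piece. Your gap parametrization $\Psi(\Delta)=\Delta$ traces exactly the same curve as the paper's sweep, which maintains \eqref{opt_cond_1} as an invariant (Lemma \ref{new_r_s}) and stops when \eqref{opt_cond_2} holds (Lemma \ref{soln_r_s}), so the breakpoints, the monotonicity argument, and the ${\cal O}(m_e\log m_e)$ accounting all coincide.
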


We will now derive such an algorithm. To simplify the notation, we consider instead of $\frac{1}{\sigma}F_e$ the function $h:\R^m \rightarrow \R$ defined by 
\[
 h(\alpha) = (\max(\alpha)-\min(\alpha))^2
\]
and show that ${\rm prox}_{\mu h}(\alpha)$, $\mu > 0$, can be computed with ${\cal O} (m\log m)$ arithmetic operations.

Let us fix $\alpha \in \R^m$. For every pair $r,s \in [\min(\alpha),\max(\alpha)]$ with $r\ge s$, we define $\alpha^{(r,s)}$ by
\begin{equation} \label{prox_gen}
 \alpha^{(r,s)}_i = \left\{ \begin{array}{ll} r & \mbox{if } \alpha_i \ge r \\ \alpha_i & \mbox{if } \alpha_i \in (r,s) \\ s & \mbox{if } \alpha_i \le s \end{array} \right.
\end{equation}
Clearly, if $r=\max(\prox_{\mu h}(\alpha))$ and $s=\min(\prox_{\mu h}(\alpha))$ then $\alpha^{(r,s)} = {\rm prox}_{\mu h}(\alpha)$.
Hence, the above definition allows us to write the proximal problem in terms of the variables $r,s$ since for 
\begin{equation} \label{prox_rs}
 (r,s) = \argmin_{\tilde{r},\tilde{s}}\{ \underbrace{\frac12 \|\alpha^{(\tilde{r},\tilde{s})} - \alpha\|_2^2}_{=:E_1(\tilde{r},\tilde{s})} + \underbrace{\mu (\tilde{r}-\tilde{s})^2}_{=:E_2(\tilde{r},\tilde{s})}\}
\end{equation}
we have
\[
 {\rm prox}_{\mu h}(\alpha) = \alpha^{(r,s)}.
\]
Our goal is now to find a minimizer of \eqref{prox_rs}. To this end, we first order $\alpha$ in an increasing order which can be done in ${\cal O}(m \log m)$ arithmetic operations. 
W.l.o.g. we assume here that the components of $\alpha$ are pairwise different. Moreover, we introduce the following notation.
For $r,s \in [\alpha_1,\alpha_m]$ there exist unique $p,q \in \{1,\dots,m\}$ characterized by $\alpha_{m-p+1}=\min\{\alpha_i|\alpha_i\ge r\}$ and $\alpha_q=\max\{\alpha_i|\alpha_i\le s\}$. Thus, the directional partial derivatives w.r.t. $r$ and $s$ are given by
 \begin{equation} \label{deriv_E1}
 \frac{\partial E_1}{\partial r^-}(r,s) = \sum_{i=m-p+1}^m (\alpha_i-r),\qquad \frac{\partial E_1}{\partial s^+}(r,s) = \sum_{i=1}^{q} (s-\alpha_i).
\end{equation}
They tell us how much we increase $E_1$ by decreasing $r$ and increasing $s$, respectively. On the other hand both of these changes lead to a decrease in the energy $E_2$. More precisely, it holds that 
\begin{equation} \label{deriv_E2}
 \frac{\partial E_2}{\partial r^-}(r,s) =  \frac{\partial E_2}{\partial s^+}(r,s) = 2\mu(s-r).
\end{equation}
Thus, the main ideas behind our algorithm are as follows. Starting with $r=\max(\alpha)$ and $s=\min(\alpha)$, we decrease $r$ and increase $s$ keeping the two partial derivatives of \eqref{deriv_E1} equal. 
We stop when the sum of the partial derivatives vanishes. So, the optimal $r,s$ are characterized by the system
\begin{align}
 & \sum_{i=m-p+1}^m (\alpha_i-r) = \sum_{i=1}^{q} (s-\alpha_i), \label{opt_cond_1}\\ 
 & \sum_{i=m-p+1}^m (\alpha_i-r) + 2\mu(s-r) = 0. \label{opt_cond_2}
\end{align}
We will now generate a sequence of pairs $r^{(k)},s^{(k)}$ satisfying $r^{(k)}\ge s^{(k)}$ and \eqref{opt_cond_1} for each $k$. The corresponding indices needed to calculate the partial derivatives will be denoted by $p^{(k)},q^{(k)}$. The main procedure is described in the next lemma.
\begin{lemma} \label{new_r_s}
Assume $r^{(k)} \in (\alpha_{m-p^{(k)}},\alpha_{m-p^{(k)}+1}]$ and $s^{(k)} \in [\alpha_{q^{(k)}},\alpha_{q^{(k)}+1})$ and property \eqref{opt_cond_1} holds for $(r^{(k)},s^{(k)})$. Then, we can either choose
\begin{equation} \label{lemma2_1}
 r^{(k+1)} = r^{(k)} - \frac{q^{(k)}}{p^{(k)}}(s^{(k+1)}-s^{(k)})\; \mbox{  and  } \;s^{(k+1)} = \alpha_{q^{(k)}+1}
\end{equation}
or
\begin{equation} \label{lemma2_2}
 r^{(k+1)} = \alpha_{m-p^{(k)}}\; \mbox{  and  } \;s^{(k+1)} = s^{(k)} + \frac{p^{(k)}}{q^{(k)}}(r^{(k)}-r^{(k+1)}) 
\end{equation}
such that $r^{(k+1)} \in [\alpha_{m-p^{(k)}},\alpha_{m-p^{(k)}+1})$, $s^{(k+1)} \in (\alpha_{q^{(k)}},\alpha_{q^{(k)}+1}]$ and \eqref{opt_cond_1} holds true for $(r^{(k+1)},s^{(k+1)})$.
\end{lemma}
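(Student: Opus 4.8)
The plan is to read one iteration of the algorithm as moving the pair $(r,s)$ along the line that keeps the two partial derivatives in \eqref{deriv_E1} equal (i.e. keeps \eqref{opt_cond_1} satisfied), until one of the two coordinates first reaches its next breakpoint in the sorted sequence $\alpha_1<\dots<\alpha_m$. Throughout the step the indices $p=p^{(k)}$ and $q=q^{(k)}$ stay fixed: by the hypothesis $r^{(k)}\in(\alpha_{m-p},\alpha_{m-p+1}]$ and $s^{(k)}\in[\alpha_q,\alpha_{q+1})$, exactly $p$ indices satisfy $\alpha_i\ge r^{(k)}$ and $q$ satisfy $\alpha_i\le s^{(k)}$, and these counts are unchanged as long as $r$ stays in $[\alpha_{m-p},\alpha_{m-p+1})$ and $s$ in $(\alpha_q,\alpha_{q+1}]$.

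First I would put the invariant \eqref{opt_cond_1} in closed form. Writing $A=\sum_{i=m-p+1}^m\alpha_i$ and $B=\sum_{i=1}^q\alpha_i$, condition \eqref{opt_cond_1} reads $A-pr=qs-B$, that is $pr+qs=A+B$. With $p,q$ fixed this is a single linear equation in $(r,s)$, so any two of its points satisfy $p(r^{(k+1)}-r^{(k)})+q(s^{(k+1)}-s^{(k)})=0$. Solving this identity for $r^{(k+1)}$ in terms of $s^{(k+1)}$ yields exactly formula \eqref{lemma2_1}, and solving for $s^{(k+1)}$ in terms of $r^{(k+1)}$ yields \eqref{lemma2_2}; hence each proposed update preserves \eqref{opt_cond_1} automatically, and only the interval memberships remain to be checked. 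Note $p,q\ge 1$ throughout (they start at $1$ and only increase), so the ratios $q/p$ and $p/q$ are well defined.

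For the ranges I would parametrize the motion by the increasing coordinate $s$: as $s$ grows from $s^{(k)}$, the relation forces $r=r^{(k)}-\frac{q}{p}(s-s^{(k)})$ to decrease. Let $\bar s=s^{(k)}+\frac{p}{q}(r^{(k)}-\alpha_{m-p})$ be the value of $s$ at which $r$ first reaches $\alpha_{m-p}$, and set $s^{(k+1)}=\min\{\alpha_{q+1},\bar s\}$. If $\alpha_{q+1}\le\bar s$ one is in case \eqref{lemma2_1}: then $s^{(k+1)}=\alpha_{q+1}\in(\alpha_q,\alpha_{q+1}]$, while $r^{(k+1)}\ge\alpha_{m-p}$ because $s^{(k+1)}\le\bar s$, and $r^{(k+1)}<r^{(k)}\le\alpha_{m-p+1}$ because $s$ strictly increased (as $s^{(k)}<\alpha_{q+1}$), so $r^{(k+1)}\in[\alpha_{m-p},\alpha_{m-p+1})$. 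If instead $\bar s<\alpha_{q+1}$ one is in case \eqref{lemma2_2}: then $r^{(k+1)}=\alpha_{m-p}\in[\alpha_{m-p},\alpha_{m-p+1})$ and $s^{(k+1)}=\bar s<\alpha_{q+1}$ with $s^{(k+1)}>s^{(k)}\ge\alpha_q$ because $r^{(k)}>\alpha_{m-p}$ forces a strict increase, so $s^{(k+1)}\in(\alpha_q,\alpha_{q+1}]$. In both branches the required memberships hold.

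The algebra of the middle paragraph (rewriting \eqref{opt_cond_1} and solving the linear relation) is immediate, and the relation to the optimality system is just that keeping the two derivatives in \eqref{deriv_E1} equal is \eqref{opt_cond_1} itself. The only point demanding care is the case split in the last paragraph, where one must track the half-open versus closed endpoints so that the boundary values $r^{(k+1)}=\alpha_{m-p}$ and $s^{(k+1)}=\alpha_{q+1}$ land in the correct half-open intervals, using the strict monotonicity of the move (guaranteed by $r^{(k)}>\alpha_{m-p}$ and $s^{(k)}<\alpha_{q+1}$) to keep $r^{(k+1)}$ and $s^{(k+1)}$ off the opposite endpoints. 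This endpoint bookkeeping is the main, if modest, obstacle.
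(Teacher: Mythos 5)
Your proposal is correct and follows essentially the same route as the paper's proof: you use the fact that preserving \eqref{opt_cond_1} with fixed $p^{(k)},q^{(k)}$ is the linear relation $p^{(k)}(r^{(k+1)}-r^{(k)})=q^{(k)}(s^{(k)}-s^{(k+1)})$, of which \eqref{lemma2_1} and \eqref{lemma2_2} are just the two rearrangements, and your threshold $\bar s<\alpha_{q^{(k)}+1}$ is exactly the paper's dichotomy that failure of choice \eqref{lemma2_1} (i.e.\ $r^{(k+1)}<\alpha_{m-p^{(k)}}$) is equivalent to validity of choice \eqref{lemma2_2}, and vice versa. Your write-up is in fact slightly more complete, since you also verify the strict endpoint memberships $r^{(k+1)}<\alpha_{m-p^{(k)}+1}$ and $s^{(k+1)}>\alpha_{q^{(k)}}$ via the strict monotonicity of the move, which the paper leaves implicit.
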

\begin{proof}
Property \eqref{opt_cond_1} for $(r^{(k+1)},s^{(k+1)})$ means that
\begin{equation}\label{lemma2_3}
  \sum_{i=m-p^{(k)}+1}^m (\alpha_i-r^{(k+1)}) = \sum_{i=1}^{q^{(k)}} (s^{(k+1)}-\alpha_i).
\end{equation}
Since by assumption \eqref{opt_cond_1} holds for $(r^{(k)},s^{(k)})$, equation \eqref{lemma2_3} is equivalent to
\[
 p^{(k)}(r^{(k+1)}-r^{(k)}) = q^{(k)}(s^{(k)}-s^{(k+1)}).
\]
If we set $(r^{(k+1)},s^{(k+1)})$ according to \eqref{lemma2_1} but $r^{(k+1)} < \alpha_{m-p^{(k)}}$. Then we get
\begin{align*}
 &r^{(k)} - \frac{q^{(k)}}{p^{(k)}}(\alpha_{q^{(k)}+1}-s^{(k)}) < \alpha_{m-p^{(k)}}\\
 \Rightarrow \; &s^{(k)} + \frac{p^{(k)}}{q^{(k)}}(r^{(k)}-\alpha_{m-p^{(k)}})<\alpha_{q^{(k)}+1},
\end{align*}
i.e., we can choose $r^{(k+1)},s^{(k+1)}$ according to \eqref{lemma2_2} and vice versa.
\end{proof}
After each computation of a new pair $(r^{(k+1)},s^{(k+1)})$ we check if the left-hand side of \eqref{opt_cond_2} is smaller than zero (note that initially the left-hand side
of \eqref{opt_cond_2} is negative and it is increasing for every iteration). If this is not the case, we found the intervals where the optimal values $r$ and $s$ lie in. Restricted to this domain the functional $E_1+E_2$ is a differentiable. Hence, we can compute $r,s$ as follows.
\begin{lemma} \label{soln_r_s}
Assume that the optimal $r,s$ of \eqref{prox_rs} fulfill $r \in [\alpha_{m-p},\alpha_{m-p+1}]$ and $s \in [\alpha_{q}, \alpha_{q+1}]$. Then, it holds that
\begin{align*}
 s &= \big(q + 2\mu-\frac{(2\mu)^2}{\sum_{i=m-p+1}^m \alpha_{i}+2\mu}\big)^{-1} \big(\frac{2\mu}{p+2\mu}\sum_{i=m-p+1}^m \alpha_{i} + \sum_{i=1}^{q} \alpha_{i}\big) \\
 r &= \frac{1}{2\mu}\big((q+2\mu)s - \sum_{i=1}^{q} \alpha_{i}\big).
\end{align*}
\end{lemma}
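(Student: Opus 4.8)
The plan is to observe that, once the interval assumption fixes the two index counts $p$ and $q$, the clipping map \eqref{prox_gen} freezes the active sets, so the energy $E_1+E_2$ of \eqref{prox_rs} becomes an ordinary smooth quadratic in the two scalars $r,s$; the lemma then reduces to solving a single $2\times 2$ linear system.

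First I would make the active sets explicit. For $r\in[\alpha_{m-p},\alpha_{m-p+1}]$ and $s\in[\alpha_q,\alpha_{q+1}]$ the truncation \eqref{prox_gen} sends precisely the $p$ largest entries $\alpha_{m-p+1},\dots,\alpha_m$ to the value $r$ and the $q$ smallest entries $\alpha_1,\dots,\alpha_q$ to the value $s$, leaving the intermediate entries fixed. Hence on this rectangle
\[
 E_1(r,s)+E_2(r,s)=\tfrac12\sum_{i=m-p+1}^m (r-\alpha_i)^2+\tfrac12\sum_{i=1}^{q}(s-\alpha_i)^2+\mu(r-s)^2,
\]
which is a differentiable quadratic in $(r,s)$ whose active sets do not change in the interior.

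Next I would set the gradient to zero. Differentiating in $r$ and in $s$ and collecting terms yields the linear system
\[
 (p+2\mu)\,r-2\mu\,s=\sum_{i=m-p+1}^m\alpha_i,\qquad (q+2\mu)\,s-2\mu\,r=\sum_{i=1}^{q}\alpha_i .
\]
The first of these is exactly \eqref{opt_cond_2}, and adding the two reproduces \eqref{opt_cond_1}, so this system is just the pair of optimality conditions rewritten. Its coefficient matrix has positive diagonal and determinant $(p+2\mu)(q+2\mu)-(2\mu)^2=pq+2\mu(p+q)>0$, hence is positive definite; the quadratic therefore has a unique stationary point, which is its global minimizer, and by the interval hypothesis this point realizes the minimum in \eqref{prox_rs}.

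It then remains to solve the two-by-two system. The second equation gives immediately $r=\tfrac{1}{2\mu}\big((q+2\mu)s-\sum_{i=1}^{q}\alpha_i\big)$, the stated expression for $r$. Substituting the first equation in the form $r=\big(\sum_{i=m-p+1}^m\alpha_i+2\mu s\big)/(p+2\mu)$ into the second and isolating $s$ produces the displayed closed form for $s$. The only genuinely delicate point — conceptual rather than computational — is the justification that the active sets are frozen on the chosen rectangle and that the stationary point actually lies inside it; this is precisely what the preceding sweep guarantees, since the pairs $(r^{(k)},s^{(k)})$ of Lemma \ref{new_r_s} keep \eqref{opt_cond_1} satisfied while the left-hand side of \eqref{opt_cond_2} increases monotonically from a negative value, so the interval in which it first turns nonnegative is exactly where the unconstrained quadratic minimizer resides.
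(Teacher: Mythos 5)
Your proof is correct and takes essentially the same route as the paper's: restrict to the rectangle on which the active sets of the clipping map are fixed, observe that $E_1+E_2$ is then a smooth quadratic in $(r,s)$, and solve the resulting $2\times 2$ stationarity system (your extra checks --- positive definiteness and the justification that the sweep lands in the correct rectangle --- only flesh out what the paper leaves implicit). One remark: your (correct) elimination yields $s = \big(q+2\mu-\frac{(2\mu)^2}{p+2\mu}\big)^{-1}\big(\frac{2\mu}{p+2\mu}\sum_{i=m-p+1}^m\alpha_i+\sum_{i=1}^q\alpha_i\big)$, i.e.\ with $p+2\mu$ in the inner denominator, whereas the lemma as printed has $\sum_{i=m-p+1}^m\alpha_i+2\mu$ there --- evidently a typo in the paper ($p$ replaced by the sum), so your final substitution step actually produces the corrected formula rather than literally ``the displayed closed form.''
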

\begin{proof}
When restricted to $[\alpha_{i},\alpha_{i+1}]\times[\alpha_j, \alpha_{j+1}]$, the function $(r,s)\mapsto E_1(r,s)+E_2(r,s)$ is a quadratic function in $(r,s)$. We can thus simply set the gradient to zero and solve the corresponding system of linear equations which yields the above result.
\end{proof}
In conclusion, we obtain the following algorithm. Note that after the sorting, the algorithm takes in the order of $m$ steps to compute the proximal map which proves Proposition \ref{runtime_prox}.
\begin{algorithm}[htb]
   \caption{ -- Solution of the proximal problem ${\rm prox}_{\mu h}(\alpha)$ \label{alg_prox_nlogn}}
\begin{algorithmic}[1]
   %\STATE {\bfseries Input:}  accuracy $\epsilon$
   \STATE Sort $\alpha \in \R^m$ in increasing order.
   \STATE {\bfseries Initialization:} $r^{(0)} = \max(\alpha),s^{(0)} = \min(\alpha)$
   \WHILE {$\frac{\partial E_1}{\partial r^-}(r^{(k)},s^{(k)}) < 2\mu(r^{(k)}-s^{(k)})$ and $q^{(k)}+1 \le m-p^{(k)}$}
   \STATE Find $(r^{(k+1)},s^{(k+1)})$ according to Lemma \ref{new_r_s}.
   \ENDWHILE
   \STATE Compute $r,s$ as described in Lemma \ref{soln_r_s}.
   \STATE {\bfseries Output:} After restoring the original order, set
   \[
     (\prox_{\mu h}(\alpha))_i = \left\{ \begin{array}{ll} r & \mbox{if } \alpha_i \ge r, \\ \alpha_i & \mbox{if } \alpha_i \in (r,s), \\ s & \mbox{if } \alpha_i \le s, \end{array} \mbox{ for } i=1,\dots,m. \right.
   \] 
\end{algorithmic}
\end{algorithm}

Hence, the corresponding PDHG algorithm can be formulated as follows.
\begin{algorithm}[htb]
   \caption{{\bf PDHG for $\Omega_{H,2}$}}
\begin{algorithmic}[1]
   %\STATE {\bfseries Input:}  accuracy $\epsilon$
   \STATE {\bfseries Initialization:} $f^{(0)} = \bar{f}^{(0)} = 0$, $\theta \in [0,1]$, $\sigma,\tau >0$ with $\sigma \tau < 1/\max_{i=1,\dots,n}\{ c_i \}$
   \REPEAT
   %\STATE ${\alpha^{(e)}}^{(k)} = {\rm prox}_{\sigma F^*_e} ({\alpha^{(e,1)}}^{(k)} + \sigma K_e \bar{f}^{(k)}) ,\quad e \in E$
   \STATE ${\alpha^e}^{(k+1)} = {\alpha^e}^{(k)} + \sigma K_e \bar{f}^{(k)} - {\rm prox}_{\frac{1}{\sigma}F_e} ({\alpha^e}^{(k)} + \sigma K_e \bar{f}^{(k)}) ,\quad e \in E$
   \STATE $f^{(k+1)} = \prox_{\tau G}(f^{(k)}-\tau\sum_{e\in E} K_e^\tT({\alpha^e}^{(k+1)}))$
   \STATE $\bar{f}^{(k+1)} = f^{(k+1)} + \theta(f^{(k+1)}-f^{(k)})$
	\UNTIL $\mbox{relative duality gap}< \epsilon$
	\STATE {\bfseries Output:} $f^{(k+1)}$.
\end{algorithmic}
\end{algorithm}

We solve the subproblems in line 3 via Algorithm \ref{alg_prox_nlogn}. Note that the bound on the step sizes is now doubled, i.e., less restrictive since we have defined for each hyperedge one functional $F_e$ and not two as for $p=1$, i.e., 
\[
\|K\|_2^2 = \|K^\tT K\|_2 = \|\sum_{e \in E} K_e^\tT K_e\|_2 = \max_{i=1,\dots,n}\{ c_i \}.
\]

%%%%%%%%%%%%%%%%%%%%%%%%%%%%%%%%%%%%%%%%%%%%%%%%%%%%%%%%%%%%%%%%%%%%%%%%%%%%%%%%%%%%%%%%%%%%%%%%%%%%%%%%%%%%%%%%%%
%%%%%%%%%%%%%%%%%%%%%%%%%%%%%%%%%%%%%%%%%%%%%%%%%%%%%%%%%%%%%%%%%%%%%%%%%%%%%%%%%%%%%%%%%%%%%%%%%%%%%%%%%%%%%%%%%%%

\section{Experiments}\label{sec:exp}
The method of Zhou et al \cite{ZhoHuaSch2006} seems to be the standard algorithm
for clustering and SSL on hypergraphs. We compare to them on a selection of UCI datasets summarized in Table \ref{tab:datasets}. Zoo, Mushrooms and 20Newsgroups\footnote{This is a modified version by Sam Roweis of the original 20 newsgroups dataset 
available at \url{http://www.cs.nyu.edu/~roweis/data/20news_w100.mat}.}	have been used also in \cite{ZhoHuaSch2006} 
and contain only categorical features. As in \cite{ZhoHuaSch2006}, a hyperedge of weight one is created by all data points which have the same value of a categorical feature. For covertype we quantize the numerical features into 10 bins of equal size. Two datasets are created each with two classes (4,5 and 6,7) of the original dataset.
\begin{table}
\centering
{\footnotesize
\begin{tabular}{l|l|l|l|l|l}
\hline
Prop. $\backslash$ Dataset &  Zoo      & Mushrooms   &  Covertype (4,5)     & Covertype (6,7) & 20Newsgroups\\
\hline
Number of classes             & 7         & 2             &  2                   & 2       & 4\\
$|V|$                         & 101       & 8124          &  12240               & 37877   & 16242\\
$|E|$                         & 42        & 112           &  104                 & 123     & 100\\
$\sum\nolimits_{e \in E} |e|$          & 1717      & 170604        &  146880              & 454522 &  65451\\
\hline
$|E|$ of Clique Exp.     & 10201     & 65999376      &  143008092           & 1348219153 & 53284642\\
\hline
\end{tabular}}
\caption{\label{tab:datasets}Datasets used for SSL and clustering. Note that the clique expansion leads for all datasets to a graph which is close
to being fully connected as all datasets contain large hyperedges. For covertype (6,7) the weight matrix needs over 10GB
of memory, the original hypergraph only 4MB.}
\end{table}

\paragraph{Semi-supervised Learning (SSL).}
In \cite{ZhoHuaSch2006}, they suggest using a regularizer induced by the normalized Laplacian $L_{CE}$ arising from the clique expansion 
\[ L_{CE} = \Id - D_{CE}^{-\frac{1}{2}} H W' H^T D_{CE}^{-\frac{1}{2}},\]
where $D_{CE}$ is a diagonal matrix with entries $d_{EC}(i)=\sum_{e \in E} H_{i,e} \frac{w_e}{|e|}$ and $W' \in \R^{|E| \times |E|}$ is a diagonal
matrix with entries $w'(e)=w_e/|e|$. The SSL problem can then be formulated as
\[ \lambda >0, \qquad \argmin\nolimits_{f \in \R^{|V|}} \{ \norm{f-Y}^2_2 + \lambda \inner{f,L_{CE} f}\}.\]
The advantage of this formulation is that the solution can be found via a linear system. However, as Table \ref{tab:datasets} indicates the obvious downside
is that $L_{CE}$ is a potentially very dense matrix and thus one needs in the worst case $|V|^2$ memory and $O(|V|^3)$ computations. This is in contrast
to our method which needs $2\sum_{e \in E}|e| + |V|$ memory. For the largest example (covertype 6,7), where the clique expansion fails due to memory problems, our method takes 30-100s (depending on $\lambda$). We stop our method for all experiments when we achieve a relative duality gap of $10^{-6}$.
%In the experiments we do 10 trials for different numbers of labeled points given in the first row and choose the regularization parameter of both methods via 5-fold cross validation, where we choose from the set $10^{-k}$ with integer $k$ and $0\leq k\leq 6$. The results can be found in the following table.
In the experiments we do 10 trials for different numbers of labeled points. %given in the first row. 
The reg. parameter $\lambda$ is chosen for both methods from the set $10^{-k}$, where $k = \{0, 1, 2, 3, 4, 5, 6 \}$ via 5-fold cross validation. 
The resulting errors and standard deviations can be found in the following table(first row lists the no. of labeled points).

Our SSL methods based on $\Omega_{H,p}$, $p=1,2$ outperform consistently the clique expansion technique of Zhou et al  \cite{ZhoHuaSch2006} on all datasets except 20newsgroups\footnote{Communications with the authors of \cite{ZhoHuaSch2006} could not clarify the difference to their results on 20newsgroups}. However, 20newsgroups is a very difficult dataset as only 10,267 out of the 16,242
data points are different which leads to a minimum possible error of $9.6\%$. A method based on pairwise interaction such as the clique expansion can better deal with such label noise as the large hyperedges for this dataset accumulate the label noise. On all other datasets we observe that incorporating hypergraph structure leads to much better results. As expected our squared TV functional ($p=2$) outperforms slightly the
total variation ($p=1$) even though the difference is small. Thus, as $\Omega_{H,2}$ reduces to the standard regularization based on
the graph Laplacian, which is known to work well, we recommend $\Omega_{H,2}$ for SSL on hypergraphs. 
%\begin{table*}
\begin{center}
{\scriptsize
\begin{tabular}{p{1.1cm}|p{1.3cm}|p{1.3cm}|p{1.3cm}|p{1.3cm}|p{1.3cm}|p{1.3cm}|p{1.3cm}}
\hline
Zoo  & 20               & 25             & 30             & 35             & 40              & 45              & 50          \\   
\hline
Zhou et al. & $35.1\pm17.2$ & $30.3\pm7.9$ & $40.7\pm14.2$ & $29.7\pm8.8$ & $32.9\pm16.8$ & $27.6\pm10.8$ & $25.3\pm14.4$  \\
$\Omega_{H,1}$ &$2.9\pm3.0$         & $\mathbf{1.4\pm2.2}$ & $\mathbf{2.2\pm2.1}$ & $\mathbf{0.7\pm1.0}$ & $\mathbf{0.7\pm1.5}$ & $\mathbf{0.9\pm1.4}$ & $1.9\pm3.0$  \\
$\Omega_{H,2}$ &$\mathbf{2.3\pm1.9}$ & $1.5\pm2.4$         & $2.9\pm2.3$         & $0.9\pm1.4$         & $0.8\pm1.7$         & $1.2\pm1.8$  
& $\mathbf{1.6\pm2.9}$  \\
\hline
\hline
\end{tabular}}
%\begin{table}
{\scriptsize
\begin{tabular}{p{1.1cm}|p{1.3cm}|p{1.1cm}|p{1.1cm}|p{1.1cm}|p{1.1cm}|p{1.1cm}|p{1.1cm}|p{1.1cm}}
Mushr.  & 20               & 40             & 60             & 80            & 100             & 120                & 160  & 200          \\  
\hline
Zhou et al.&  $\mathbf{15.5\pm12.8}$       & $10.9\pm4.4$         & $9.5\pm2.7$          & $10.3\pm2.0$         & $9.0\pm4.5$ & $8.8\pm1.4$ & $8.8\pm2.3$ & $9.3\pm1.0$ \\ 
$\Omega_{H,1}$ &$19.5\pm10.5$  & $10.8\pm3.7$         & $\mathbf{7.4\pm3.8}$ & $\mathbf{5.6\pm1.9}$ & $\mathbf{5.7\pm2.2}$ & $5.4\pm2.4$ & $4.9\pm3.8$ & $5.6\pm3.8$ \\ 
$\Omega_{H,2}$ &$18.4\pm7.4$   & $\mathbf{9.8\pm4.5}$ & $9.9\pm5.5$          & $6.4\pm2.7$           & $6.3\pm2.5$ & $\mathbf{4.5\pm1.8}$ & $
\mathbf{4.4\pm2.1}$ & $\mathbf{3.0\pm0.6}$ \\ 
\hline
\hline
covert45  & 20               & 40             & 60             & 80            & 100             & 120               & 160  & 200          \\ 
\hline 
Zhou et al.& $\mathbf{18.9\pm4.6}$ & $18.3\pm5.2$ & $17.2\pm6.7$          & $16.6\pm6.4$          & $17.6\pm5.2$ &        $18.4\pm5.1$ & $19.2\pm4.0$ & $20.4\pm2.9$ \\ 
$\Omega_{H,1}$ &$21.4\pm0.9$    & $17.6\pm2.6$          & $12.6\pm4.3$          & $7.6\pm3.5$          & $6.2\pm3.8$ & $4.5\pm3.6$ & $2.6\pm1.6$ & 
$1.5\pm1.3$ \\ 
$\Omega_{H,2}$ &$20.7\pm2.0$    & $\mathbf{16.1\pm4.1}$ & $\mathbf{10.9\pm4.9}$ & $\mathbf{5.9\pm3.7}$ & $\mathbf{4.6\pm3.4}$ & $\mathbf{3.3\pm3.1}$ & $\mathbf{2.2\pm1.8}$ & $\mathbf{1.0\pm1.1}$ \\ 
\hline
\hline
covert67  & 20               & 40             & 60             & 80            & 100             & 120               & 160  & 200          \\ 
\hline 
$\Omega_{H,1}$ &$40.6\pm8.9$          & $6.4\pm10.4$        & $3.6\pm3.2$          & $3.3\pm2.5$        & $1.8\pm0.8$         & $1.3\pm0.9$ & $0.9\pm0.4$ & $1.2\pm0.9$ \\ 
$\Omega_{H,2}$ &$\mathbf{25.2\pm18.3}$ & $\mathbf{4.3\pm9.6}$ & $\mathbf{2.1\pm2.0}$ & $\mathbf{2.2\pm1.4}$ & $\mathbf{1.4\pm1.1}$ & $\mathbf{1.0\pm0.8}$ & $\mathbf{0.7\pm0.4}$ & $\mathbf{1.1\pm0.8}$ \\ 
\hline
\hline
20news  & 20               & 40             & 60             & 80            & 100             & 120               & 160  & 200          \\  
\hline
Zhou et al.& $\mathbf{45.5\pm7.5}$ & $\mathbf{34.4\pm3.1}$ & $\mathbf{31.5\pm1.4}$ & $\mathbf{29.8\pm4.0}$ & $\mathbf{27.0\pm1.3}$ & $\mathbf{27.3\pm1.5}$ & $\mathbf{25.7\pm1.4}$ & $\mathbf{25.0\pm1.3}$ \\
$\Omega_{H,1}$ &$65.7\pm6.1$ & $61.4\pm6.1$ & $53.2\pm5.7$ & $46.2\pm3.7$ & $42.4\pm3.3$ & $40.9\pm3.2$ & $36.1\pm1.5$ & $34.7\pm3.6$ \\ 
$\Omega_{H,2}$ &$55.0\pm4.8$ & $48.0\pm6.0$ & $45.0\pm5.9$ & $40.3\pm3.0$ & $38.3\pm2.7$ & $38.1\pm2.6$ & $35.0\pm2.8$ & $34.1\pm2.0$ \\
\hline 
\end{tabular}}
\begin{center}Test error and standard deviation of the SSL methods over 10 runs for
varying number of labeled points.
\end{center}
\end{center}
%\end{table}

\paragraph{Clustering.}
%We now compare our clustering results against \cite{ZhoHuaSch2006}. 
We use the normalized hypergraph cut as clustering objective. 
For more than two clusters we recursively partition the hypergraph until the desired number of clusters is reached.
For comparison we use the normalized spectral clustering approach based on the Laplacian $L_{CE}$ \cite{ZhoHuaSch2006}(clique expansion). 
The first part (first 6 columns) of the following table shows the clustering errors (majority vote on each cluster) of both methods as well as 
the normalized cuts achieved by these methods on the hypergraph and on the graph resulting from the clique expansion. 
Moreover, we show results (last 4 columns) which are obtained based on a $k$NN graph (unit weights) which is built based on the Hamming distance
(note that we have categorical features) in order to check if the hypergraph modeling of the problem is actually useful compared to a standard
similarity based graph construction.
The number $k$ is chosen as the smallest number for which the graph becomes connected and we compare results of normalized
$1$-spectral clustering \cite{HeiBue2010} and the standard spectral clustering \cite{Lux07}.
Note that the employed hypergraph construction has no free parameter.
%In the following table we report, clustering error, normalized cut values of the hypergraph and 
%the graph resulting from the clique expansion. 

%\begin{table}
%\begin{tabular}{|c|cc|cc|}
\begin{center}
{\scriptsize
\begin{tabular}{|l|cc|cc|cc||cc|cc|}
\hline
 & \multicolumn{2}{c|}{Clustering Error \%} & \multicolumn{2}{c|}{Hypergraph Ncut} & \multicolumn{2}{c||}{Graph(CE) Ncut} &\multicolumn{2}{c|}{Clustering Error \%}  &\multicolumn{2}{c|}{$k$NN-Graph Ncut}\\
%Dataset & Ours & Zhou et al. & Ours & Zhou et al. & Ours & Zhou et al. \\
Dataset & Ours & \cite{ZhoHuaSch2006} & Ours & \cite{ZhoHuaSch2006} & Ours & \cite{ZhoHuaSch2006} 
& \cite{HeiBue2010} & \cite{Lux07} & \cite{HeiBue2010} & \cite{Lux07}\\
%& 1spectral \cite{HeiSet2011} & spectral \cite{Lux07} &
% 1spectral \cite{HeiSet2011} & spectral \cite{Lux07}\\
 \hline
 Mushrooms & 10.98  &   32.25 &  0.0011 &   0.0013  & 0.6991   & 0.7053 
					& 48.2 & 48.2 & 1e-4 & 1e-4\\ 					 
 Zoo & 16.83 & 15.84 & 0.6739  &   0.6784  & 5.1315 &  5.1703
					 & 5.94 & 5.94 & 1.636 & 1.636\\	  				
% 20-newsgroup & 69.05  &69.05 &0.0204 &       0.0363 &        2.3898 &     1.9256    \\
 20-newsgroup &47.77  & 33.20 & 0.0176 &      0.0303    &  2.3846  &    1.8492   
 					& 66.38 & 66.38 & 0.1031 & 0.1034\\
 covertype (4,5) &22.44  & 22.44 &0.0018 &      0.0022 &        0.7400  &    0.6691
 					& 22.44 & 22.44 & 0.0152 & 0.02182\\
 covertype (6,7) & 8.16   &     -     &   8.18e-4  &       -&             0.6882 & - 
 					& 45.85 & 45.85 & 0.0041 & 0.0041\\
\hline
\end{tabular}
}
\end{center}
%\begin{center}
%{\scriptsize
%\begin{tabular}{l|cc|cc|cc}
%\hline
% & \multicolumn{2}{c|}{Clustering Error \%} & \multicolumn{2}{c|}{Hypergraph Ncut} & \multicolumn{2}{c}{Graph(CE) Ncut}  \\
%Dataset & Ours & Zhou et al. & Ours & Zhou et al. & Ours & Zhou et al. \\
% \hline
% Mushrooms & 10.98  &   32.25 &  0.0011 &   0.0013  & 0.6991   & 0.7053\\
% Zoo & 16.83 & 15.84 & 0.6739  &   0.6784  & 5.1315 &  5.1703\\
%% 20-newsgroup & 69.05  &69.05 &0.0204 &       0.0363 &        2.3898 &     1.9256    \\
% 20-newsgroup &47.77  & 33.20 & 0.0176 &      0.0303    &  2.3846  &    1.8492   \\
% covertype (4,5) &22.44  & 22.44 &0.0018 &      0.0022 &        0.7400  &    0.6691\\
% covertype (6,7) & 8.16   &     -     &   8.18e-4  &       -&             0.6882 & - \\
%\hline
%\end{tabular}
%}
%\end{center}
%\end{table}

First, we observe that our approach optimizing the normalized hypergraph cut yields better or similar results in terms of clustering errors compared to the clique expansion (except for 20-newsgroup for the same reason given in the previous paragraph). %because of the issue discussed in the previous paragraph on SSL).  
The improvement is significant in case of Mushrooms while for Zoo our clustering error is slightly higher. 
However, we always achieve smaller normalized hypergraph cuts.  Moreover, our method sometimes has even smaller cuts on the graphs resulting from the clique expansion, 
although it does not directly optimize this objective in contrast to \cite{ZhoHuaSch2006}.
Again, we could not run the method of \cite{ZhoHuaSch2006} on covertype (6,7) since the weight matrix is very dense. 
Second, the comparison to a standard graph-based approach where the similarity structure is obtained using the Hamming distance on the categorical features
shows that using hypergraph structure is indeed useful. Nevertheless, we think that there is room for improvement regarding the construction of the hypergraph
which is a topic for future research.
%%%%%%%%%%%%%%%%%%%%%%%%%%%%%%%%%%%%%%%%%%%%%%%%%%%%%%%%%%%%%%%%%%%%%%%%%%%%%%%%%%%%%%%%%%%%%%%%%%%%%%%%%%%%%%%%%%
%%%%%%%%%%%%%%%%%%%%%%%%%%%%%%%%%%%%%%%%%%%%%%%%%%%%%%%%%%%%%%%%%%%%%%%%%%%%%%%%%%%%%%%%%%%%%%%%%%%%%%%%%%%%%%%%%%

%\section{Conclusion}\label{sec:con}

\subsubsection*{Acknowledgments}
M.H. would like to acknowledge support by the ERC Starting Grant NOLEPRO and L.J. acknowledges support by the DFG
SPP-1324.%%%%%%%%%%%%%%%%%%%%%%%%%%%%%%%%%%%%%%%%%%%%%%%%%%%%%%%%%%%%%%%%%%%%%%%%%%%%%%%%%%%%%%%%%%%%%%%%%%%%%%%%%%%%%%%%%%
%%%%%%%%%%%%%%%%%%%%%%%%%%%%%%%%%%%%%%%%%%%%%%%%%%%%%%%%%%%%%%%%%%%%%%%%%%%%%%%%%%%%%%%%%%%%%%%%%%%%%%%%%%%%%%%%%%
{
\small
\bibliography{literatur-matthias}

\begin{thebibliography}{10}

\bibitem{HuaLiuMet2009}
Y.~Huang, Q.~Liu, and D.~Metaxas.
\newblock Video object segmentation by hypergraph cut.
\newblock In {\em CVPR}, pages 1738 -- 1745, 2009.

\bibitem{OchBro2012}
P.~Ochs and T.~Brox.
\newblock Higher order motion models and spectral clustering.
\newblock In {\em CVPR}, pages 614--621, 2012.

\bibitem{KlaHauThe2009}
S.~Klamt, U.-U. Haus, and F.~Theis.
\newblock Hypergraphs and cellular networks.
\newblock {\em PLoS Computational Biology}, 5:e1000385, 2009.

\bibitem{TiaHwaKua2009}
Z.~Tian, T.~Hwang, and R.~Kuang.
\newblock A hypergraph-based learning algorithm for classifying gene expression
  and arraycgh data with prior knowledge.
\newblock {\em Bioinformatics}, 25:2831--2838, 2009.

\bibitem{GibKleRag2000}
D.~Gibson, J.~Kleinberg, and P.~Raghavan.
\newblock Clustering categorical data: an approach based on dynamical systems.
\newblock {\em VLDB Journal}, 8:222--236, 2000.

\bibitem{BuEtAl2010}
J.~Bu, S.~Tan, C.~Chen, C.~Wang, H.~Wu, L.~Zhang, and X.~He.
\newblock Music recommendation by unified hypergraph: Combining social media
  information and music content.
\newblock In {\em Proc. of the Int. Conf. on Multimedia (MM)}, pages 391--400,
  2010.

\bibitem{ShaZasHaz2006}
A.~Shashua, R.~Zass, and T.~Hazan.
\newblock Multi-way clustering using super-symmetric non-negative tensor
  factorization.
\newblock In {\em ECCV}, pages 595--608, 2006.

\bibitem{RotPel2009}
S.~Rota Bulo and M.~Pellilo.
\newblock A game-theoretic approach to hypergraph clustering.
\newblock In {\em NIPS}, pages 1571--1579, 2009.

\bibitem{LeoSmi2012}
M.~Leordeanu and C.~Sminchisescu.
\newblock Efficient hypergraph clustering.
\newblock In {\em AISTATS}, pages 676--684, 2012.

\bibitem{AgaEtAl2005}
S.~Agarwal, J.~Lim, L.~Zelnik-Manor, P.~Petrona, D.~J. Kriegman, and
  S.~Belongie.
\newblock Beyond pairwise clustering.
\newblock In {\em CVPR}, pages 838--845, 2005.

\bibitem{ZhoHuaSch2006}
D.~Zhou, J.~Huang, and B.~Sch{\"o}lkopf.
\newblock Learning with hypergraphs: Clustering, classification, and embedding.
\newblock In {\em NIPS}, pages 1601--1608, 2006.

\bibitem{AgaBraBel2006}
S.~Agarwal, K.~Branson, and S.~Belongie.
\newblock Higher order learning with graphs.
\newblock In {\em ICML}, pages 17--24, 2006.

\bibitem{IhlWagWag1993}
E.~Ihler, D.~Wagner, and F.~Wagner.
\newblock Modeling hypergraphs by graphs with the same mincut properties.
\newblock {\em Information Processing Letters}, 45:171--175, 1993.

\bibitem{HeiBue2010}
M.~Hein and T.~B{\"u}hler.
\newblock An inverse power method for nonlinear eigenproblems with applications
  in 1-spectral clustering and sparse {PCA}.
\newblock In {\em NIPS}, pages 847--855, 2010.

\bibitem{SB10}
A.~Szlam and X.~Bresson.
\newblock Total variation and {C}heeger cuts.
\newblock In {\em ICML}, pages 1039--1046, 2010.

\bibitem{HeiSet2011}
M.~Hein and S.~Setzer.
\newblock Beyond spectral clustering - tight relaxations of balanced graph
  cuts.
\newblock In {\em NIPS}, pages 2366--2374, 2011.

\bibitem{BueEtAl2013}
T.~B{\"u}hler, S.~Rangapuram, S.~Setzer, and M.~Hein.
\newblock Constrained fractional set programs and their application in local
  clustering and community detection.
\newblock In {\em ICML}, pages 624--632, 2013.

\bibitem{Bach11}
F.~Bach.
\newblock Learning with submodular functions: A convex optimization
  perspective.
\newblock {\em CoRR}, abs/1111.6453, 2011.

\bibitem{BelNiy03b}
M.~Belkin and P.~Niyogi.
\newblock Semi-supervised learning on manifolds.
\newblock {\em Machine Learning}, 56:209--239, 2004.

\bibitem{Zhou2003s}
D.~Zhou, O.~Bousquet, T.~N. Lal, J.~Weston, and B.~Sch{\"o}lkopf.
\newblock Learning with local and global consistency.
\newblock In {\em NIPS}, volume~16, pages 321--328, 2004.

\bibitem{SM00}
J.~Shi and J.~Malik.
\newblock Normalized cuts and image segmentation.
\newblock {\em IEEE Trans. Patt. Anal. Mach. Intell.}, 22(8):888--905, 2000.

\bibitem{Lux07}
U.~{von Luxburg}.
\newblock A tutorial on spectral clustering.
\newblock {\em Statistics and Computing}, 17:395--416, 2007.

\bibitem{EssZhaCha10}
E.~Esser, X.~Zhang, and T.~F. Chan.
\newblock A general framework for a class of first order primal-dual algorithms
  for convex optimization in imaging science.
\newblock {\em SIAM Journal on Imaging Sciences}, 3(4):1015--1046, 2010.

\bibitem{ChaPoc11}
A.~Chambolle and T.~Pock.
\newblock A first-order primal-dual algorithm for convex problems with
  applications to imaging.
\newblock {\em J. of Math. Imaging and Vision}, 40:120--145, 2011.

\bibitem{Con2013}
L.~Condat.
\newblock A primal–dual splitting method for convex optimization involving
  lipschitzian, proximable and linear composite terms.
\newblock {\em J. Optimization Theory and Applications}, 158(2):460--479, 2013.

\bibitem{Kiwiel2007}
K.~Kiwiel.
\newblock On {Linear-Time} algorithms for the continuous quadratic knapsack
  problem.
\newblock {\em J. Opt. Theory Appl.}, 134(3):549--554, 2007.

\bibitem{ComWaj2005}
P.~L. Combettes and V.~R. Wajs.
\newblock Signal recovery by proximal forward-backward splitting.
\newblock {\em Multiscale Modeling and Simulation}, 4(4):1168--1200, 2005.

\end{thebibliography}
\bibliographystyle{unsrt}
}

%%%%%%%%%%%%%%%%%%%%%%%%%%%%%%%%%%%%%%%%%%%%%%%%%%%%%%%%%%%%%%%%%%%%%%%%%%%%%%%%%%%%%%%%%%%%%%%%%%%%%%%%%%%%%%%%%%
%%%%%%%%%%%%%%%%%%%%%%%%%%%%%%%%%%%%%%%%%%%%%%%%%%%%%%%%%%%%%%%%%%%%%%%%%%%%%%%%%%%%%%%%%%%%%%%%%%%%%%%%%%%%%%%%%%
% END OF PAPER %%%%%%%%%%%%%%%%%%%%%%%%%%%%%%%%%%%%%%%%%%%%%%%%%%%%%%%%%%%%%%%%%%%%%%%%%%%%%%%%%%%%%%%%%%%%%%%%%%%
%%%%%%%%%%%%%%%%%%%%%%%%%%%%%%%%%%%%%%%%%%%%%%%%%%%%%%%%%%%%%%%%%%%%%%%%%%%%%%%%%%%%%%%%%%%%%%%%%%%%%%%%%%%%%%%%%%
%%%%%%%%%%%%%%%%%%%%%%%%%%%%%%%%%%%%%%%%%%%%%%%%%%%%%%%%%%%%%%%%%%%%%%%%%%%%%%%%%%%%%%%%%%%%%%%%%%%%%%%%%%%%%%%%%%

\end{document}